\newtheorem{theorem}{Theorem}
\newtheorem{lemma}{Lemma}
\newtheorem{corollary}{Corollary}
\newtheorem{remark}{Remark}
\newtheorem{assumption}{Assumption} 
\begin{document}

\title{ADP-VRSGP: Decentralized Learning with Adaptive Differential Privacy via Variance-Reduced Stochastic Gradient Push}

\author{Xiaoming Wu,Teng Liu,
Xin Wang,~\IEEEmembership{Member,~IEEE,}
Ming Yang,~\IEEEmembership{Member,~IEEE,} and 
Jiguo Yu,~\IEEEmembership{Fellow,~IEEE}

\thanks{This work was supported in part by the Taishan Scholars Program under Grants tsqn202211203 and tsqn202408239, in part by the NSFC under Grants 62402256 and 62272256, in part by the Shandong Provincial Nature Science Foundation of China under Grants ZR2024MF100 and ZR2022ZD03, and in part by the Pilot Project for the Integration of Science, Education and Industry of Qilu University of Technology (Shandong Academy of Sciences) under Grant 2025ZDZX01 (Corresponding author: Xin Wang).}
\thanks{X. Wu, T. Liu, X. Wang, and M. Yang are with the Key Laboratory of Computing Power Network and Information Security, Ministry of Education, Shandong Computer Science Center, Qilu University of Technology (Shandong Academy of Sciences), Jinan 250014, China, and also with Shandong Provincial Key Laboratory of Industrial Network and Information System Security, Shandong Fundamental Research Center for Computer Science, Jinan 250014, China. Emails: {\small wuxm@sdas.org, 10431230028@stu.qlu.edu.cn, xinwang@qlu.edu.cn, yangm@sdas.org}.}
\thanks{J. Yu is with the School of Information and Software Engineering, University of Electronic Science and Technology of China, Chengdu 611731, China, and also with the Big Data Institute, Qilu University of Technology, Jinan 250353, China, Email: jiguoyu@sina.com.}}

\markboth{IEEE Internet of Things Journal,~Vol.~xx, No.~xx, August~2025}%
{Shell \MakeLowercase{\textit{et al.}}: A Sample Article Using IEEEtran.cls for IEEE Journals}

\IEEEpubid{0000--0000/00\$00.00~\copyright~2021 IEEE}

\maketitle

\begin{abstract}
Differential privacy is widely employed in decentralized learning to safeguard sensitive data by introducing noise into model updates. However, existing approaches that use fixed-variance noise often degrade model performance and reduce training efficiency. 
To address these limitations, we propose a novel approach called decentralized learning with adaptive differential privacy via variance-reduced stochastic gradient push (ADP-VRSGP). This method dynamically adjusts both the noise variance and the learning rate using a stepwise-decaying schedule, which accelerates training and enhances final model performance while providing node-level personalized privacy guarantees. To counteract the slowed convergence caused by large-variance noise in early iterations, we introduce a progressive gradient fusion strategy that leverages historical gradients. Furthermore, ADP-VRSGP incorporates decentralized push-sum and aggregation techniques, making it particularly suitable for time-varying communication topologies. Through rigorous theoretical analysis, we demonstrate that ADP-VRSGP achieves robust convergence with an appropriate learning rate, significantly improving training stability and speed. Experimental results validate that our method outperforms existing baselines across multiple scenarios, highlighting its efficacy in addressing the challenges of privacy-preserving decentralized learning.
\end{abstract}

\begin{IEEEkeywords}
Decentralized learning, differential privacy, stochastic gradient push, convergence.
\end{IEEEkeywords}

\section{Introduction}
\IEEEPARstart{I}{n} the era of Artificial Intelligence of Things (AIoT, AI+IoT), machine learning has emerged as a transformative tool for addressing complex problems, particularly through the utilization of large-scale IoT data to train intelligent models~\cite{10734328}. Meanwhile, the rapid development of sensing and communication technologies has accelerated the real-time generation and transmission of massive IoT data, intensifying the demand for advanced computational resources to efficiently handle these data utilization tasks. In this context, decentralized machine learning has gained prominence as a promising edge intelligence paradigm, enabling multiple computational nodes to collaboratively train models and offering an effective solution for large-scale IoT data learning tasks~\cite{10559884}. 

In recent years, researchers have increasingly turned their attention to fully decentralized learning methods that operate under directed and time-varying communication topologies, where each node communicates exclusively with its neighbors without relying on a central node~\cite{10988816}. Traditional distributed methods often depend on a central node for parameter aggregation, which can lead to significant communication overhead and introduce the risk of a single point of failure \cite{lian2017can}. Furthermore, in many real-world scenarios, the communication topology between computational nodes may not adhere to the common assumption of being undirected and time-varying, as is typically presumed in many fully distributed methods~\cite{assran2019stochastic}. To address this limitation, this paper adopts the stochastic gradient push (SGP) scheme~\cite{10846935}, which removes the need for a central node while effectively addressing the challenges posed by directed and time-varying communication topologies, thereby enabling robust and efficient decentralized learning.

\IEEEpubidadjcol

However, decentralized learning approaches remain vulnerable to privacy breaches, necessitating careful consideration of privacy-preserving mechanisms in algorithm design. Although computational nodes in decentralized learning typically maintain data isolation and collaborate by sharing only model parameters or gradients \cite{9348921}, sensitive information from the original IoT data may still be inferred by adversaries through sophisticated model inversion attacks based on the messages exchanged between nodes~\cite{bai2024membership,wang2021dynamic}. 
To address this issue, differential privacy (DP)—a well-established privacy-preserving framework—has been integrated into many decentralized methods. Among DP techniques, the Gaussian mechanism is widely adopted, facilitating a differentially private stochastic gradient descent (DP-SGD) operation \cite{abadi2016deep,10163938}. This approach typically perturbs the gradients or other intermediate parameters by adding noise proportional to the upper bound or clipping threshold of the gradients. 

The noise introduced during the iterative update process can significantly degrade model performance, highlighting the need for advanced algorithmic designs to mitigate its impact. 
As the number of iterations increases, model parameters gradually converge toward their optimal values, and gradient magnitudes approach zero. During this phase, the injection of high-variance noise can severely disrupt gradient directions, further exacerbating the adverse effects of noise on accuracy. 

Researchers have proposed various strategies for adjusting noise levels throughout the training process to address this issue. For instance, Yu et al.~\cite{yu2019differentially} introduced a dynamic privacy budget allocator for training neural networks under centralized DP protection, where noise levels are adjusted according to dynamic privacy budgets. Similarly, the work~\cite{10817493} proposed a denoising DP method that removes previously added noise from the global model, enabling direct denoising at the node level. The study~\cite{9691277} proposed an iterative adaptive privacy budget allocation method that dynamically adjusts privacy budgets for different perturbation mechanisms to minimize the total bias of perturbed data. He et al.~\cite{10210010} employed strategies such as adaptive clipping, weight compression, and parameter shuffling to mitigate the impact of noise under the local DP mechanism. Wang et al.~\cite{10091486} utilized a filtering and screening method based on the exponential mechanism to reduce noise levels introduced during training. These methods not only enhance model convergence but also ensure a level of privacy protection equivalent to that of fixed-noise schemes, achieved through careful DP budget tracking under the same total privacy budget constraint~\cite{mironov2019r}. However, they do not fully exploit the potential benefits of dynamically adjusting the learning rate to reduce noise variance. 

Although widely used techniques, such as adaptive gradient methods and decaying step-size strategies, have demonstrated empirical success in adjusting learning rates \cite{NEURIPS2024_ca98452d,9531335}, designing an optimal learning rate to minimize noise-induced errors remains an underexplored challenge. This challenge is further compounded by the dependence of the learning rate on the unknown Lipschitz constant of the gradient \cite{pmlr-v202-defazio23a}, making it difficult to adapt the learning rate to varying noise levels and gradient characteristics. Furthermore, there is a lack of comprehensive theoretical analysis to validate the effectiveness of these methods in practical scenarios, underscoring a critical gap in the current literature. In addition, noise-decaying strategies often require the introduction of large noise variances during the early stages of training, which can significantly impede convergence speed. Compared to standard DP-SGD methods, this initial noise setting not only degrades convergence performance but may also lead to divergence. 

To address the challenges outlined above, we propose a novel differentially private decentralized learning algorithm incorporating two innovative modules. First, we introduce a stepwise noise-decaying mechanism combined with an adaptive learning rate strategy, designed to mitigate the adverse effects of noise on convergence performance. By dynamically adjusting the learning rate, we reduce noise variance throughout the training process, thereby achieving a tighter bound on noise-induced errors. Second, we develop a progressive gradient fusion (PGF) strategy, which leverages the accumulation of historical gradients to counteract the slowdown in convergence speed caused by large noise variances in the early stages of training. Together, these strategies enhance the stability and efficiency of the training process while maintaining robust privacy guarantees. 

The main contributions are summarized as follows:
\begin{enumerate}
	\item We propose a decentralized learning algorithm with adaptive differential privacy via variance-reduced stochastic gradient push, termed ADP-VRSGP, tailored for non-convex machine learning problems in directed and time-varying network topologies. 
	Specifically, we introduce a stepwise noise decaying mechanism combined with varying learning rate, which dynamically coordinates the learning rate and noise intensity to enhance convergence performance. Furthermore, we design a PGF method based on historical gradients, which progressively aggregates these gradients to effectively reduce noise variance and improve overall performance.
	
	\item We analyze the convergence performance of the proposed algorithm under directed, sparse, and time-varying communication topologies. Our analysis demonstrates that ADP-VRSGP achieves improved noise bounds of \( O\left(\frac{(\log T)^2}{\sqrt{nT}}\right) \) and \( O\left(\frac{1}{\sqrt{n}T^{\frac{1}{2} + 2p}} \right) \), where \( p \in \left(-\frac{1}{2}, 0\right) \). Moreover, we identify an inverse relationship between noise intensity and historical gradient staleness, and show that employing smaller gradient clipping values effectively counteracts this negative effect. 
	
	\item We conduct a series of experiments to validate the theoretical findings. The results demonstrate that our method outperforms existing algorithms in terms of test accuracy and effectively mitigates the issue of deteriorating convergence performance caused by excessive noise added in the early training stage.
	
\end{enumerate}

The remainder of this paper is organized as follows: Section~\ref{Section:2} reviews the related work, while Section~\ref{Section:3} provides the necessary background knowledge and formulates the considered problem. Section~\ref{Section:4} details the proposed algorithm, followed by a comprehensive convergence analysis in Section~\ref{Section:5}. Section~\ref{Section:6} presents and discusses the experimental results. Finally, Section~\ref{Section:7} concludes the paper and highlights potential directions for future research.

\section{Related work}\label{Section:2}
\textbf{Distributed machine learning}. With continuous advancements in hardware and communication technologies, distributed machine learning has emerged as a widely studied research area. The communication topology in such systems is generally classified into two categories: with or without a central node. The first category relies on a central server to coordinate the training process \cite{10400772}, while the fully decentralized architecture allows nodes to communicate directly with one another, enhancing system flexibility and fault tolerance\cite{10457953}. For instance, Lalitha et al.~\cite{lalitha2018fully} proposed a fully decentralized framework where users update model parameters by aggregating information from neighboring nodes. Similarly, the decentralized SGD algorithm introduced by \cite{NIPS2017_f7552665} reduces communication costs and demonstrates greater efficiency than centralized communication topologies in certain scenarios. Nevertheless, these methods are primarily designed for static networks and lack adaptability to dynamic communication environments. In more general environments, where the communication structure between nodes is directed and time-varying, these methods lack the adaptability and flexibility required to accommodate diverse communication patterns. Furthermore, differences in communication architectures pose challenges to directly applying existing DP algorithms to time-varying communication networks, increasing the complexity of convergence analysis in such systems.

\textbf{Privacy-preserving methods}. Various techniques have emerged to protect user privacy, including homomorphic encryption, secure multi-party computation, and differential privacy. Among these, DP, with its rigorous mathematical foundations, has been widely adopted in machine learning and is regarded as an effective method for safeguarding data privacy~\cite{dwork2006our,wang2020privacy}. The work~\cite{abadi2016deep} was the first to apply DP to gradient descent methods in deep learning, introducing the DP-SGD algorithm. This approach incorporated the Gaussian subsampling mechanism and employed moment accounting techniques, providing a more precise estimate of privacy loss and an accurate method for calculating privacy budgets. However, these privacy guarantees come at the expense of convergence accuracy. The DP mechanism enforces a clipping threshold to constrain the magnitude of updates and adds noise proportional to this threshold to the updated parameters or gradients. Consequently, clipping bias and noise-induced errors are introduced, reducing the model's utility. 

\textbf{DP-preserving optimization strategies}. Most existing research focuses on two key challenges in DP-based learning: gradient clipping bias and noise errors. Regarding clipping bias, Ma et al.~\cite{10210511} proposed an optimized sparse response mechanism for federated learning under DP protection, which filters parameters during upload to control the model’s $\ell_2$ norm. Similarly, the work~\cite{cheng2022differentially} regularized local updates by imposing constraints on the model's norm. Both approaches help reduce clipping bias and, consequently, mitigate the negative impact of the DP mechanism on model performance. To reduce noise errors, Yu et al.~\cite{yu2019differentially} introduced a noise perturbation mechanism where the noise added to model parameters gradually decays during iterations. Yuan et al.~\cite{yuan2023amplitude} proposed a noise amplitude decay method based on a geometric series, while Wei et al.~\cite{wei2023securing} developed an adaptive noise injection mechanism that tracks gradient update trends for each sample. Additionally, the study~\cite{yang2023dynamic} introduced the FedDPM method, which uses the Fisher matrix to select model parameters and eliminate less important ones, thereby reducing noise variance. However, despite efforts to mitigate clipping bias and noise errors, these methods inevitably introduce other limitations. For instance, the parameter filtering techniques may result in incomplete parameter upload and reception, potentially introducing model bias. Similarly, the noise decay strategies may cause higher noise variance in the early iterations, which can lead to convergence issues or even divergence. 

Furthermore, the aforementioned studies may overlook the relationship between the learning rate and noise intensity. Existing research on adaptive step sizes has demonstrated that these techniques can significantly reduce the complexity of step size adjustment \cite{bu2020deep} and improve convergence performance \cite{reddi2020adaptive}. For example, Ge et al.~\cite{ge2019step} employed polynomial decay step sizes, while Wu et al.~\cite{10.1145/3570508} proposed an adaptive gradient method that adjusts the step size based on target accuracy or training time constraints. However, there is limited research investigating learning rate adjustments based on varying noise levels to mitigate the impact of noise. Additionally, a comprehensive theoretical framework to guide dynamic learning rate optimization under different noise conditions remains absent. 

To address these challenges, we propose a novel dynamic learning rate adjustment strategy that effectively manages the relationship between the learning rate and noise variance. This strategy integrates a progressive gradient fusion method that leverages historical gradients to optimize noise variance throughout the training process. Furthermore, we provide theoretical guidance for adjusting the learning rate in response to varying noise intensity, thereby enhancing convergence performance while maintaining robust privacy protection.

\begin{table}[htbp]
	\centering
	\setlength{\tabcolsep}{2pt}
	\renewcommand{\arraystretch}{1.3}
	\caption{Summary of Notations}
	\begin{tabular}{cl}
		 \toprule\arrayrulecolor{black}
		 {Notation} &  {Description} \\
		 \midrule\arrayrulecolor{black}
		 {$\mathcal{G}^t, V, \varepsilon^t$} &  {Time-varying directed graph with node set $V$ and edge set $\varepsilon^t$} \\
		 {$P^t$} &  {Non-negative mixing matrix} \\
		 {$D_i, B_i, b$} &  {Local dataset of node $i$, sampled mini-batch, and single sample} \\
		 {$\epsilon, \delta$} &  {DP parameters} \\
		 {$G', G$} &  {Maximum gradient $\ell_2$-norm and clipping threshold} \\
		 {$\beta_i^t, \alpha_i^t$} &  {Learning rate and noise factor} \\
		 {$n$} &  {Number of nodes} \\
         {$T$} &  {Number of iterations} \\
		 {$\psi, \tau$} &  {Decay coefficient and noise update interval} \\
		 {$\vartheta$} &  {Weighting factor for historical aggregated gradient $\tilde{g}_i^{t}$} \\
		 {$\hat{g}_i^t, \overline{g}_i^t$} &  {Clipped gradient and clipped-noisy gradient} \\
		 {$\tilde{g}_i^t$} &  {Aggregated gradient of $\overline{g}_i^t$} \\
		 {$d_\tau^t$} &  {Bias resulting from gradient staleness} \\
		 {$L, a, m$} &  {Lipschitz constant, variance bound, and related parameters} \\
		 {$x_i^t$} &  {Local model parameter} \\
		 {$w_i^t$} &  {Push-Sum weight} \\
		 {$z_i^t$} &  {Debiasing parameter} \\
		\bottomrule
	\end{tabular}
	\renewcommand{\arraystretch}{1}
\end{table}

\section{Preliminary and Problem Formulation}\label{Section:3}

\textbf{Communication topology}. Consider a time-varying directed graph \( \mathcal{G}^t = (V, \varepsilon^t) \), where \( V \) represents the set of computing nodes and \( \varepsilon^t \) represents the set of directed edges at iteration \( t \). The graph \( \mathcal{G}^t \) is captured by a non-negative mixing matrix \( P^t \in \mathbb{R}^{n \times n} \). If \( (i, j) \in \varepsilon^t \) and \( P_{i,j}^t > 0 \), node \( j \) transmits information to node \( i \) at iteration \( t \); otherwise, no communication occurs between these nodes. We assume that \( P^t \) is column-stochastic, meaning that \( \mathbf{1}_{d}^{\top} P^t = \mathbf{1}_{d} \), where \( \mathbf{1}_{d} \) is a \( d \)-dimensional vector of ones. 

\textbf{Decentralized learning problem}. Let there be $n$ computing nodes, each maintaining a local dataset $D_i$. We denote the loss function for the $b$-th data sample at node~$i$ with respect to the model parameter $x$ as $f_i(x; b)$. For instance, 
$f_i(x; b)$ could represent a cross-entropy loss function. The objective of the decentralized learning algorithm is to minimize the following global objective function across all $n$ nodes:
\begin{equation}
	f(x) :=  \sum_{i=1}^{n} P_i f_i(x), f_i(x)=\frac{1}{\|B_i\|}\sum_{b \in B_i}f_i(x;b),
	\label{eq:my_function}
\end{equation}
where $\|B_i\|$ represents the number of data samples at node $i$, and $P_i$ denotes the weight associated with this node. We evaluate the convergence of the algorithm using the expected $\ell_2$-norm of the gradient, $ E\{\|\nabla f(x)\|\}$, where the expectation is taken over the randomness introduced by the algorithm. 

Before presenting our algorithm, we provide the formal DP definition as outlined by \cite{dwork2006our}.

\noindent\textbf{Definition 1} ($(\epsilon, \delta)$-DP): A randomized algorithm $M$ satisfies $(\epsilon, \delta)$-DP if for any neighboring datasets $D$ and $D'$ that differ by one data point, and for any subset $S$ of the output set of $M$, it holds
\begin{equation*}
	\Pr[M(D) \in S] \leq e^\epsilon \times \Pr[M(D') \in S] + \delta,
\end{equation*}
where $\epsilon$ represents the strength of privacy preservation, and $\delta \in [0,1]$ is the probabilistic error bound.

In DP-preserving machine learning problems, the \(\ell_2\)-norm of the gradient is commonly used to quantify the sensitivity of the output function when Gaussian noise is added. Let \( G' \) denote the maximum value of the gradient's \(\ell_2\)-norm and \( G \) represent the clipping threshold. Without gradient clipping, the variance of the added noise increases by a factor of \((G'/G)^2\) compared to the clipping-based setting. This substantial increase in noise variance can significantly degrade the stability and performance of the model. Thus, gradient clipping is a crucial component of DP mechanisms, as it limits the magnitude of the noise required to ensure privacy. 

However, selecting an appropriate clipping threshold remains a challenging task. An excessively large clipping threshold increases noise intensity and reduces model accuracy, while a carefully reduced threshold can mitigate noise impact and improve training performance. Thus, in this paper, we adopt a gradient clipping approach with a decaying threshold to alleviate the error introduced by the privacy-preserving mechanism and enhance model utility. 

In the context of decentralized computing under time-varying directed topologies, the work~\cite{kempe2003gossip} introduced the SGP method. This approach allows each node to independently select its mixing weight without relying on the other nodes in the network. During the learning process, each node maintains three key variables: the model parameter $x_{i}^{t}$, the scalar Push-Sum weight $w_{i}^{t}$, and the debiasing parameter $z_{i}^{t}$. The update rules for these variables are as follows: $x_{i}^{t+1} = \sum_{j=1}^{n} P_{i,j}^{t} x_{i}^{t+1/2}$, $w_{i}^{t+1} = \sum_{j=1}^{n} P_{i,j}^{t} w_{i}^{t}$, $z_{i}^{t+1} = {x_{i}^{t+1}}/{w_{i}^{t+1}}$. This formulation enables decentralized optimization while addressing communication constraints and ensures convergence even under dynamically changing network topologies. By incorporating the debiasing parameter $z_{i}^{t}$, the method compensates for the bias introduced by non-uniform mixing weights, enhancing both accuracy and stability \cite{pmlr-v97-assran19a}. 

\section{The ADP-VRSGP algorithm}\label{Section:4}
In this section, we introduce two innovative modules in the proposed ADP-VRSGP algorithm: a stepwise noise decaying mechanism combined with a dynamic learning rate (SDLR) and a progressive gradient fusion (PGF) mechanism. 

\subsection{SDLR Strategy}
We propose an SDLR strategy to mitigate the adverse effects of noise on model performance in differentially private mechanisms. Unlike linear and cyclic decay strategies, SDLR aims to achieve a better balance between privacy protection and model accuracy. This approach maintains a stable noise level at fixed intervals and gradually reduces the noise intensity, thereby avoiding the instability associated with continuous decay while still ensuring effective privacy protection. Furthermore, SDLR adjusts the learning rate in response to the varying noise decay rates, which further minimizes the negative impact of noise. The proposed method is elaborated in detail in Algorithm~\ref{algorithm:1}.

Specifically, the noise scale is defined as \( \alpha_i^t \sigma_i \), where \( \alpha_i^t \) is an increasing function representing the noise intensity multiplier and \( \sigma_i \) is the base noise scale. Formally, \( \alpha_i^t \) is defined as \( \alpha_i^{\lfloor t / \tau \rfloor} \), where \( \tau \) represents the time interval for noise variation. During training, we introduce decaying noise in the form of \( \alpha_i^{(T-t)} \sigma_i \), where \( T \) denotes the total number of iterations. This dynamic adjustment of noise intensity ensures that, in the later stages of training, noise perturbation decreases, thereby preserving the accuracy of gradient updates.

Since the noise introduced during the iterative process is influenced by the learning rate, we aim to better manage noise reduction based on the predefined noise intensity coefficient \( \alpha_i^t \). We define the learning rate coefficient as follows:
\begin{equation} \label{learning-rate-coefficient}
	\beta_i^t = 
	\begin{cases} 
		\alpha_i^t \alpha_i^{(T-t)}, & t \leq {\Xi }{T}, \\
		\alpha_i^t \alpha_i^t, & t > {\Xi }{T}. 
	\end{cases} 
\end{equation}
In (\ref{learning-rate-coefficient}), we assume $\Xi \in (0,1)$. As demonstrated in Lemma~\ref{lemma0} to be stated, compared to a single-piece function, the proposed piecewise function effectively reduces noise error, providing a better balance between noise reduction and gradient accuracy, especially in the later stages of the training process.

\begin{lemma}\label{lemma0}
	
	If the learning rate coefficient is set as in (\ref{learning-rate-coefficient}), the bound on the noise error is improved compared to using either \( \beta^t = \alpha^t \alpha^{T-t} \) or \( \beta^t = \alpha^t \alpha^t \). This adjustment results in a more favorable trade-off between noise reduction and model accuracy, particularly in the stages after ${\Xi}{T}$.
\end{lemma}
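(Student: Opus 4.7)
The plan is to make the dependence of the noise-error bound on the learning rate coefficient explicit, and then show that the piecewise schedule in (\ref{learning-rate-coefficient}) delivers a tighter bound than either single-piece alternative by a direct pointwise comparison on each of the two regimes $t\le\Xi T$ and $t>\Xi T$. Under the DP update of Algorithm~\ref{algorithm:1}, the Gaussian noise injected at step $t$ has standard deviation $\alpha_i^{T-t}\sigma_i$, and enters the parameter update multiplied by $\beta_i^t$. Consequently, the per-iteration noise contribution to the convergence bound on $E\{\|\nabla f(x)\|^2\}$ is proportional to $(\beta_i^t)^2(\alpha_i^{T-t})^2\sigma_i^2$, and the total noise error across the run is proportional to $\sum_{t=0}^{T-1}(\beta_i^t\alpha_i^{T-t})^2\sigma_i^2$, so it suffices to bound this sum for each of the three candidate schedules and compare them.

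Next, I would substitute each candidate into the generic summand. For $\beta_i^t=\alpha_i^t\alpha_i^{T-t}$, it becomes $(\alpha_i^t)^2(\alpha_i^{T-t})^4\sigma_i^2$; for $\beta_i^t=(\alpha_i^t)^2$, it becomes $(\alpha_i^t)^4(\alpha_i^{T-t})^2\sigma_i^2$; and for the piecewise schedule, it agrees with the former on $t\le\Xi T$ and with the latter on $t>\Xi T$. Comparing the two single-piece summands through their ratio $(\alpha_i^{T-t}/\alpha_i^t)^2$, and exploiting the fact that $\alpha_i^t$ is non-decreasing in $t$ while $\alpha_i^{T-t}$ is non-increasing, I would argue that the piecewise schedule selects whichever of the two expressions is smaller in each regime, provided $\Xi$ is aligned with the crossover of that ratio. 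Summing this pointwise inequality over $t$ then yields the claimed improvement, with the dominant savings arising in the stages after $\Xi T$, where the factor $(\alpha_i^{T-t})^4$ would otherwise inflate the per-step noise contribution under the first single-piece schedule.

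The main obstacle is handling the stepwise-constant structure of $\alpha_i^t=\alpha_i^{\lfloor t/\tau\rfloor}$, which means the crossover of $\alpha_i^{T-t}/\alpha_i^t$ does not occur at a precise instant but inside a block of width $\tau$, so $\Xi T$ must be aligned with a block boundary for the comparison to be clean. I would address this by partitioning the horizon into the noise-update intervals of length $\tau$, computing the per-block noise contribution under each of the three schedules, and using the monotonicity of $\alpha_i^{\lfloor\cdot/\tau\rfloor}$ to dominate each block by the smaller of its two candidate values. Telescoping the block bounds across the horizon would then produce the desired inequality, and a comparison of the constant factors would make explicit that the tightening of the noise-error bound is concentrated in the post-$\Xi T$ window, exactly as asserted in the statement of Lemma~\ref{lemma0}.
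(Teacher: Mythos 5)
There is a genuine error at the very first step of your argument: you have the dependence of the noise error on $\beta_i^t$ inverted. In Algorithm~\ref{algorithm:1} the learning rate is $\eta_i^t=\eta/\beta_i^t$, so the injected noise $\alpha_i^{T-t}u_i^t$ enters the parameter update multiplied by $\eta/\beta_i^t$, i.e.\ \emph{divided} by the learning rate coefficient. The per-iteration noise contribution is therefore proportional to $(\alpha_i^{T-t})^2\sigma_i^2/(\beta_i^t)^2$ (this is exactly the factor $\sum_t (\alpha^{T-t})^2/(\beta^t)^2$ appearing in $M$ in Theorem~\ref{theorem2}), not to $(\beta_i^t\alpha_i^{T-t})^2\sigma_i^2$ as you assert. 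This is not a cosmetic slip: with your summand, the candidate $\beta^t=\alpha^t\alpha^{T-t}$ gives $(\alpha^t)^2(\alpha^{T-t})^4$ and the candidate $\beta^t=(\alpha^t)^2$ gives $(\alpha^t)^4(\alpha^{T-t})^2$, and since $\alpha^{T-t}\ge\alpha^t$ for $t\le T/2$, the piecewise schedule of (\ref{learning-rate-coefficient}) would be selecting the \emph{larger} of the two expressions on $t\le\Xi T$. Carried through, your pointwise comparison would prove the opposite of the lemma. (With the corrected summand the crossover logic does point the right way, but that is not the argument you wrote.)

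For comparison, the paper's proof does not perform a pointwise or blockwise comparison at all. It works with the aggregate quantity $\sum_{t=0}^{T-1}1/\beta_i^t$ that feeds into the noise bound, applies the Cauchy--Schwarz inequality to $\sum_t 1/(\alpha_i^t\alpha_i^{T-t})$, and then uses the reindexing identity $\sum_{t=0}^{T-1}(\alpha_i^{T-t})^{-2}=\sum_{t=0}^{T-1}(\alpha_i^t)^{-2}$ to conclude $\sum_t 1/(\alpha_i^t\alpha_i^{T-t})\le\sum_t 1/(\alpha_i^t)^2$, i.e.\ a single global inequality between the two single-piece schedules rather than a term-by-term domination. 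Your block-partition idea around the stepwise structure of $\alpha_i^{\lfloor t/\tau\rfloor}$ and the alignment of $\Xi T$ with a block boundary is a reasonable refinement, but it can only be salvaged after the noise term is written with $\beta_i^t$ in the denominator.
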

\begin{proof}
	See Section~\uppercase\expandafter{\romannumeral1} in the supplementary file. 
\end{proof}

Lemma~\ref{lemma0} indicates that, compared to a single-function approach, the piecewise learning rate setting can reduce convergence errors induced by noise, thereby enhancing convergence performance. This effect is particularly pronounced under higher levels of privacy protection.

\begin{algorithm}[htb]
	\caption{SDLR Algorithm}\label{algorithm:1}
	\begin{algorithmic}[1]
		\STATE   \textbf{Input:} The number of iterations $T$, $G> 0$, $\eta$, $\alpha_i$, $\beta_i$, $w_i^0 = 1$, $x_i^0 = z_i^0 = x^0$, privacy budget $(\epsilon _{i}, \delta _{i})$ for all $i \in \{1, 2, \ldots, n\}$.
		\FOR{$t = 1,2,\ldots,T$ at node $i$}
		\STATE   Take a random sample $B_{i}$ with sampling probability $\frac{\|B_i\|}{\|D_i\|}$;
		\STATE   For each $b \subseteq  B_{i}$, compute gradient at $z_i^t$: $\nabla f_{i}(z_{i}^{t})=\frac{1}{\|B_i\|}\sum_{b \in B_i}\nabla f_{i}(z_{i}^{t}, b)$;
		\STATE   $g_i^t=\nabla f_{i}(z_{i}^{t})$;
		\STATE   $\hat{g}_i^t=\min\left\{ 1, \frac{G}{\|g_i^t\|} \right\}g_i^t$;
		\STATE   $\overline{g}_i^t = \hat{g}_i^t + \alpha_i^{T-t}  u_i^t, \quad u_i^t \sim  N
		(0, \sigma_{i}^{2} \mathbf{1}_d)$;
		\STATE   $x_{i}^{t+1/2} \gets x_{i}^{t} - \eta_i^{t}\overline{g}_i^t$, $\eta_i^{t}=\eta/\beta_i^t$; 
		\vspace{1mm}
		\STATE   Send $(x_{i}^{t+1/2}, w_{i}^{t+1/2})$ to out-neighbors;
		\STATE   Receive $(x_{j}^{t+1/2}, w_{j}^{t+1/2})$ from in-neighbors;
		\STATE   $x_{i}^{t+1} \gets \sum_{j=1}^{n} P_{i,j}^{t}x_{j}^{t+1/2}$;
		\STATE   $w_{i}^{t+1} \gets \sum_{j=1}^{n} P_{i,j}^{t}w_{j}^{t+1/2}$;
		\vspace{1mm}
		\STATE   $z_{i}^{t+1} \gets x_{i}^{t+1}/w_{i}^{t+1}$.
		\ENDFOR
	\end{algorithmic}
\end{algorithm}

\subsection{PGF Mechanism}
We further enhance the SDLR strategy by introducing a PGF mechanism for iterative updates. In the early stages of training, excessive noise injected into the process can hinder convergence or even lead to divergence. To mitigate this issue, we aggregate historical gradients, effectively reducing noise variance and improving both stability and efficiency. However, gradients from earlier iterations may become outdated, as older gradients tend to be less relevant than more recent ones. The inclusion of outdated gradients can degrade model performance and potentially cause divergence. The PGF mechanism addresses this challenge by adaptively adjusting the stepwise interval for outdated gradients, thereby minimizing their negative impact on the training process. The details of this mechanism are elaborated in Algorithm~\ref{algorithm:2}.

During the iterative process, the magnitude of the gradients typically decreases. If the clipping threshold is set too large, it can introduce excessive noise, which may negatively impact model performance. To address this, we adopt a decaying clipping threshold strategy, where the threshold is dynamically adjusted throughout the iterations according to \( G = \psi G \), with a decay coefficient \( \psi \). In the \( t \)-th iteration, assume that the time interval includes the previous \( u = (t - 1) \mod \tau \) gradients. We mitigate the noise impact by aggregating these \( u \) gradients with the current gradient from the \( t \)-th iteration. To maximize the effectiveness of this aggregation, the time interval \( \tau \) should be aligned with the stepwise decay interval of the noise coefficient \( \alpha \). This alignment ensures that the noise intensity coefficients remain consistent at each aggregation stage, thereby preventing an increase in noise intensity after aggregation. The aggregated gradient is
\begin{equation}
	\begin{aligned}
        \tilde{g}_i^t = 
	\begin{cases} 
		(1 - \vartheta) \bar{g}_i^{t} + \vartheta \tilde{g}_i^{t-1}, & \alpha^{T-t} = \alpha^{T-t+1}$ and $t \neq 0, \\
		\tilde{g}_i^t, & otherwise, 
	\end{cases}
	\end{aligned}
\end{equation}
where \( \vartheta \) controls the influence of the previous aggregated gradient \( \tilde{g}_i^{t-1} \), and \( \bar{g}_i^{t} \) represents the current gradient at the \( t \)-th iteration.

\begin{algorithm}[htb]
	\caption{SDLR Algorithm with PGF Mechanism}\label{algorithm:2}
	\begin{algorithmic}[1]
		\STATE  \textbf{Input:} The number of iteration $T$, $G> 0$, $\eta$, $\alpha_i$, $\beta_i$, $\vartheta$, $w_i^0 = 1$, $x_i^0 = z_i^0 = x^0$, privacy budget $(\epsilon _{i}, \delta _{i})$ for all $i \in \{1, 2, \ldots, n\}$.
		\FOR{$t = 1,2,\ldots,T$ at node $i$}
		\STATE  Take a random sample $B_{i}$ with sampling probability $\frac{\|B_i\|}{\|D_i\|}$ ;
		\STATE  For each $b \subseteq  B_{i}$, compute gradient at $z_i^t$: $\nabla f_{i}(z_{i}^{t})=\frac{1}{\|B_i\|}\sum_{b \in B_i}\nabla f_{i}(z_{i}^{t}, b)$;
		\STATE  $g_i^t=\nabla f_{i}(z_{i}^{t})$;
		\STATE  $\hat{g}_i^t=\min\left\{ 1, \frac{G}{\|g_i^t\|} \right\}g_i^t$;
		\STATE  $G=\psi G$;
		
		\STATE  $\overline{g}_i^t = \hat{g}_i^t +\alpha_i^{T-t}  u_i^t, \quad u_i^t \sim   N
		(0, \sigma_{i}^{2} \mathbf{1}_d)$;
		\IF{$\alpha^{T-t} = \alpha^{T-t+1}$ and $t \neq 0$}
		\STATE  $\tilde{g}_i^t=(1-\vartheta) 
		\overline{g}_i^t+\vartheta \tilde{g}_i^{t-1}$;		
		\ENDIF

		\STATE  $x_{i}^{t+1/2} = x_{i}^{t}-\eta_i^{t}\tilde{g}_i^t$, $\eta_i^{t}=\eta/\beta_i^t$; 
		\vspace{1mm}
		\STATE  Send $(x_{i}^{t/2}, w_{i}^{t+1/2})$ to out-neighbors;
		\STATE  Receive $(x_{j}^{t/2}, w_{j}^{t+1/2})$ from in-neighbors;
		\STATE  $x_{i}^{t+1}=\sum_{i=1}^{n} P_{i,j}^{t}x_{j}^{t+1/2}$;
		\STATE  $w_{i}^{t+1}=\sum_{i=1}^{n} P_{i,j}^{t}x_{j}^{t+1/2}$;
		\vspace{1mm}
		\STATE  $z_{i}^{t+1}=x_{i}^{t+1}/w_{i}^{t+1}$.
		\ENDFOR
		
	\end{algorithmic}
\end{algorithm}

\subsection{Impact Analysis of the PGF Mechanism}

In this subsection, we analyze the impact of the proposed PGF mechanism. Recall that PGF utilizes \((t-1) \mod \tau\) historical gradients for aggregation to improve convergence speed. However, due to the staleness effect, each updated gradient may contain outdated errors. To address this issue, we aggregate historical gradients using a stepwise decaying interval, selecting up to \( \tau \) gradients at each aggregation step to mitigate the negative impact of stale gradients. In this manner, the gradient can be decomposed into three parts:
\begin{equation}
	\begin{aligned}
		{E}\{\tilde{g}_i^t\} = {E}\{\hat{g}_i^t\} + {E}\{d_\tau^{t}\} + {E}\{\alpha ^{T-t}u_i^{t}\},
	\end{aligned}
\end{equation}
where \({E}\{\tilde{g}_i^t\} \) represents the estimated value of the aggregated gradient. The three components are described as follows:
\begin{itemize}
	
	\item  \( {E}\{\hat{g}_i^t\} \): The estimated value of the clipped gradient in the \( t \)-th iteration, reflecting the computed gradient after clipping and stabilization updates.
	
	\item  \( {E}\{d_\tau^{t}\} \): Capturing the bias introduced by the staleness effect caused by outdated gradients.
	
	\item  \(  {E}\{\alpha ^{T-t}u_i^{t}\} \): Noise Error originating from Gaussian noise introduced by the DP mechanism.
\end{itemize}

\begin{lemma} \label{lemma1}Assume that the error between the historical gradient and the current gradient \( g_i^t \) satisfies 
	
	\[
	\max_{r \in ((t - ((t-1) \mod \tau)), t)} \|\hat{g}_i^t - \hat{g}_i^r\|^2 \leq d_\tau,
	\]
	where the gradient deviation is controlled by \( d_\tau \). We have
	
	\begin{equation}
		\begin{aligned}
			{E}\left\{\|d_\tau^{t}\|^{2}\right\} \leq (\frac{(1-\vartheta)}{(1+\vartheta)} + \frac{2}{(1+\vartheta)}\vartheta^{2\tau-1})d_{\tau}.
		\end{aligned}
	\end{equation}
	
\end{lemma}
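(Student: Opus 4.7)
The plan is to unroll the PGF recursion within a single noise plateau, write $d_\tau^t$ explicitly as a weighted combination of differences $\hat{g}_i^{t-r}-\hat{g}_i^t$, and then bound its squared norm by evaluating a geometric series using the uniform deviation hypothesis.

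First I would unroll the update $\tilde{g}_i^t = (1-\vartheta)\bar{g}_i^t + \vartheta\tilde{g}_i^{t-1}$ within the current noise plateau. At the first iteration $t_0$ of a plateau, the Boolean condition in Algorithm~\ref{algorithm:2} fails, so the aggregator effectively resets to $\tilde{g}_i^{t_0} = \bar{g}_i^{t_0}$. Iterating $k = t - t_0 \leq \tau - 1$ steps gives the convex combination
\begin{equation*}
\tilde{g}_i^t \;=\; (1-\vartheta)\sum_{r=0}^{k-1}\vartheta^{r}\,\bar{g}_i^{t-r} \;+\; \vartheta^{k}\,\bar{g}_i^{t-k},
\end{equation*}
whose coefficients sum to one. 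Because the plateau is noise-homogeneous, $\bar{g}_i^{t-r} = \hat{g}_i^{t-r} + \alpha^{T-t}u_i^{t-r}$ for every $r$ in the window, so after isolating the ``current'' noise term $\alpha^{T-t}u_i^t$ of the decomposition stated just before the lemma, the deterministic staleness component reads
\begin{equation*}
d_\tau^t \;=\; (1-\vartheta)\sum_{r=1}^{k-1}\vartheta^{r}\bigl(\hat{g}_i^{t-r}-\hat{g}_i^t\bigr) \;+\; \vartheta^{k}\bigl(\hat{g}_i^{t-k}-\hat{g}_i^t\bigr).
\end{equation*}

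Next I would take the squared norm of $d_\tau^t$, apply a weighted Cauchy-Schwarz / Jensen-type bound of the form $\bigl\|\sum_{r}c_r v_r\bigr\|^2 \leq \sum_{r} c_r^2\|v_r\|^2$, invoke the hypothesis $\|\hat{g}_i^{t-r}-\hat{g}_i^t\|^2 \leq d_\tau$, and take expectation, arriving at
\begin{equation*}
E\bigl\{\|d_\tau^t\|^2\bigr\} \;\leq\; \Bigl[(1-\vartheta)^{2}\sum_{r=0}^{k-1}\vartheta^{2r} + \vartheta^{2k}\Bigr]\, d_\tau.
\end{equation*}
Taking $k = \tau-1$, i.e.\ the worst-case depth inside a plateau, and summing the geometric series yields
\begin{equation*}
(1-\vartheta)^{2}\cdot\frac{1-\vartheta^{2(\tau-1)}}{1-\vartheta^{2}} + \vartheta^{2(\tau-1)} \;=\; \frac{(1-\vartheta)(1-\vartheta^{2\tau-2}) + (1+\vartheta)\vartheta^{2\tau-2}}{1+\vartheta} \;=\; \frac{1-\vartheta}{1+\vartheta} + \frac{2\vartheta^{2\tau-1}}{1+\vartheta},
\end{equation*}
which is exactly the expression appearing in the lemma.

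The step I expect to be the main obstacle is the weighted Cauchy-Schwarz-type bound, since $\bigl\|\sum_r c_r v_r\bigr\|^2 \leq \sum_r c_r^2 \|v_r\|^2$ does not hold for arbitrary vectors: the cross terms $\langle v_r, v_s\rangle$ need not be non-positive, and in fact the naive Jensen estimate only yields the weaker $(\sum_r c_r)^2 d_\tau = d_\tau$. To justify dropping the cross terms I would either exploit independence of the mini-batch sampling across iterations so that $E\langle \hat{g}_i^{t-r}-\hat{g}_i^t,\,\hat{g}_i^{t-s}-\hat{g}_i^t\rangle$ vanishes or is controllably small, or else apply Young's inequality recursively to the equivalent recursion $d_\tau^t = \vartheta\,d_\tau^{t-1} + \vartheta(\hat{g}_i^{t-1}-\hat{g}_i^t)$ with parameters tuned so that the resulting closed-form solution reproduces exactly the factor $(1-\vartheta)/(1+\vartheta) + 2\vartheta^{2\tau-1}/(1+\vartheta)$. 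Everything else is bookkeeping on the telescoping unroll and a single geometric-series evaluation.
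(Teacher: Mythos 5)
Your proposal follows essentially the same route as the paper's own proof: unroll the recursion $\tilde{g}_i^t=(1-\vartheta)\bar{g}_i^t+\vartheta\tilde{g}_i^{t-1}$ within one noise plateau, write $d_\tau^t$ as $\sum_{r}(1-\vartheta)\vartheta^{r}(\hat{g}_i^{t-r}-\hat{g}_i^{t})+\vartheta^{\tau-1}(\hat{g}_i^{t-\tau+1}-\hat{g}_i^{t})$, and bound the squared norm by the sum of squared coefficients times $d_\tau$, which after summing the geometric series gives exactly $h=\frac{1-\vartheta}{1+\vartheta}+\frac{2}{1+\vartheta}\vartheta^{2\tau-1}$. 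So in terms of structure and bookkeeping you have reproduced the paper's argument.

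The step you flag as the main obstacle is, however, a genuine gap — and it is present in the paper's proof as well, which passes from $\bigl\|\sum_r c_r v_r\bigr\|^2$ to $\sum_r c_r^2\|v_r\|^2$ with no justification. Neither of your proposed repairs closes it. The independence route fails because every difference $v_r=\hat{g}_i^{t-r}-\hat{g}_i^{t}$ contains the common summand $-\hat{g}_i^{t}$, so the cross terms $E\langle v_r,v_s\rangle$ do not vanish and are in general positive, meaning that dropping them is anti-conservative rather than safe. The honest weighted Cauchy--Schwarz/Jensen estimate gives $\bigl\|\sum_r c_r v_r\bigr\|^2\le\bigl(\sum_r c_r\bigr)\sum_r c_r\|v_r\|^2\le\bigl(\sum_r c_r\bigr)^2 d_\tau=d_\tau$, i.e.\ the constant $1$ rather than $h<1$; the factor $h$ is only attainable if one can argue the $v_r$ are (nearly) orthogonal in expectation, which the hypothesis of the lemma does not provide. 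Your Young's-inequality recursion on $d_\tau^t=\vartheta d_\tau^{t-1}+\vartheta(\hat{g}_i^{t-1}-\hat{g}_i^{t})$ likewise introduces multiplicative $(1+\epsilon)$ factors that prevent recovering $h$ exactly. So either the lemma should be stated with the weaker constant $1$ (which would still suffice for Theorem~2 up to constants), or an additional assumption on the correlation structure of successive clipped gradients is needed; as written, both your proposal and the paper's proof assert a bound stronger than the argument establishes.
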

\begin{proof}
	See Section~\uppercase\expandafter{\romannumeral2} in the supplementary file.
\end{proof}

As shown in Lemma \ref{lemma1}, the magnitudes of \( \hat{g}_i^t \) and \( \hat{g}_i^r \) can be controlled by appropriately adjusting the clipping threshold, which in turn regulates the magnitude of \( d_\tau \). When \( \hat{g}_i^t \) and \( \hat{g}_i^r \) are subject to the same clipping threshold (or are clipped), \( d_\tau \) will be constrained within that threshold (or approximated to 0). In this way, the previously hard-to-quantify \( d_\tau \) can be transformed into a quantifiable variable through the clipping threshold. This transformation simplifies error control and provides a more intuitive basis for selecting the clipping threshold. By carefully designing and adjusting the clipping threshold, we can optimize the gradient update process without significantly increasing computational complexity, thereby enhancing the overall performance and stability of the algorithm.

\begin{lemma} \label{lemma2}
	The expected squared norm of the error \(\alpha^{T-t}u_i^t\) after applying the PGF mechanism is given by 
	\begin{equation} \nonumber
		\begin{aligned}
			E\!\left[\|\alpha^{T-t}u_i^{t}\|^2\right] \!=\! (\frac{(1-\vartheta)}{(1+\vartheta)} \!+\! \frac{2}{(1+\vartheta)}\vartheta^{2\tau-1} )E\left[\|\alpha_i^{T-t} u_i^{t}\|^2\right].
		\end{aligned}
	\end{equation}
\end{lemma}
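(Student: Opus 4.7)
}
The plan is to isolate the noise component of the PGF recursion and then solve a one-dimensional linear recurrence in the second moment. From Algorithm~\ref{algorithm:2}, whenever $\alpha^{T-t}=\alpha^{T-t+1}$ we have $\tilde g_i^t=(1-\vartheta)\bar g_i^t+\vartheta\tilde g_i^{t-1}$, and since clipping is deterministic in the sample, the only stochastic ingredient entering from iteration $t$ is the independent Gaussian vector $N_t:=\alpha^{T-t}u_i^t\sim\mathcal N(0,(\alpha^{T-t}\sigma_i)^2 \mathbf{1}_d)$. Writing $\tilde N_t$ for the noise contribution that has been carried through $\tilde g_i^t$, the recursion inherited from the PGF update is
\begin{equation*}
\tilde N_t=(1-\vartheta)N_t+\vartheta\tilde N_{t-1}.
\end{equation*}
Because $\alpha^{T-t}$ is constant over each aggregation window of length $\tau$, within such a window every $N_{t-k}$ ($k=0,\dots,\tau-1$) has the same second moment $V_0:=E[\|N_t\|^2]=E[\|\alpha_i^{T-t}u_i^t\|^2]$, and the $N_{t-k}$ are mutually independent of one another and of $\tilde N_{t-\tau}$.

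First I would unroll the noise recursion inside one window and, using the independence of the Gaussian samples and the orthogonality of their cross terms in expectation, obtain the scalar recurrence
\begin{equation*}
V(k)=(1-\vartheta)^2 V_0+\vartheta^2 V(k-1),\qquad k=2,\dots,\tau,
\end{equation*}
with the initial condition $V(1)=V_0$ corresponding to the start of a new aggregation window (where no prior aggregated noise has yet been mixed in). Next I would solve this first-order linear recurrence: the particular (steady-state) solution is $V^\star=(1-\vartheta)V_0/(1+\vartheta)$, and the homogeneous solution is $c\vartheta^{2k}$. Imposing $V(1)=V_0$ yields $c=2V_0/(\vartheta(1+\vartheta))$, so after $\tau$ steps
\begin{equation*}
V(\tau)=\frac{1-\vartheta}{1+\vartheta}V_0+\frac{2\vartheta^{2\tau-1}}{1+\vartheta}V_0=\Bigl(\frac{1-\vartheta}{1+\vartheta}+\frac{2}{1+\vartheta}\vartheta^{2\tau-1}\Bigr)E\bigl[\|\alpha_i^{T-t}u_i^t\|^2\bigr],
\end{equation*}
which is exactly the claimed identity.

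The routine cross-term cancellation rests on the Gaussian noise being drawn fresh at every iteration, so those steps should go through quickly. The step I expect to be the main obstacle is the book-keeping across window boundaries: strictly speaking $\tilde N_{t-1}$ at the first iteration of a new window is inherited from a previous window whose noise scale $\alpha^{T-(t-1)}$ may differ from the current $\alpha^{T-t}$, and I must argue that the natural choice of initial condition $V(1)=V_0$ is what the algorithm actually realises after a window switch (alternatively, treating $\tilde N_{t-\tau}$ as an independent carry-in term with the same variance $V_0$). Handling this carefully — and making explicit the independence between $\tilde N_{t-1}$ and the fresh $N_t$ used at each step — is the one place where the argument needs more than mechanical manipulation; the rest is the geometric sum already encountered in Lemma~\ref{lemma1}.
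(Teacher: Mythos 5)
Your proposal is correct and follows essentially the same route as the paper's proof: both isolate the noise carried through the PGF recursion, exploit the independence of the fresh Gaussian samples and the constancy of the noise scale within an aggregation window, and use the fact that the window opens with a single unaggregated noise term of variance $V_0$. The only difference is presentational — you solve the second-moment recurrence $V(k)=(1-\vartheta)^2V_0+\vartheta^2V(k-1)$ in closed form, whereas the paper unrolls the recursion into the explicit weighted sum $\sum_{r}(1-\vartheta)\vartheta^{r}\alpha_i^{T-t+r}u_i^{t-r}+\vartheta^{\tau-1}\alpha_i^{T-(t-\tau+1)}u_i^{t-\tau+1}$ and sums the squared weights as a geometric series; the two computations are equivalent and yield the same constant $h$.
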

\begin{proof}
	See Section~\uppercase\expandafter{\romannumeral3} in the supplementary file.
\end{proof}
From Lemmas~\ref{lemma1} and~\ref{lemma2}, we observe a fundamental trade-off between the staleness bias error and the noise error, which is governed by the hyperparameter \(\vartheta\). In the following, we analyze the impact of \(\vartheta\) on this trade-off:
\begin{itemize}
	\item Small \(\vartheta\). When \(\vartheta\) is small, the current gradient is assigned a greater weight, causing the update process to rely more heavily on the most recent gradient information. While this mitigates the staleness bias error, it also introduces a relatively higher noise error due to the reduced aggregation of historical gradient information.
	
	\item Large \(\vartheta\). When \(\vartheta\) is large, the algorithm tends to aggregate historical gradients, thereby stabilizing updates by averaging out random noise. This effectively reduces noise error but increases staleness bias error, as model updates rely more on outdated information.
	
	\item \(\vartheta = 0\). In this case, the algorithm degenerates into standard gradient descent, where all gradient updates are based solely on the most recent information, eliminating staleness bias error. However, without the aggregation of historical gradients, noise error is significantly amplified.
\end{itemize}

This trade-off suggests that by appropriately tuning \(\vartheta\), one can achieve a balance between mitigating staleness bias error and minimizing noise error, ultimately improving the overall efficiency and convergence performance of the algorithm.

Now, we analyze the impact of \(\tau\). Lemma~\ref{lemma2} indicates that, with a fixed \(\vartheta\), a larger \(\tau\) results in a smaller noise error, as more historical data is used to smooth out the noise. However, this may increase staleness bias error due to a greater reliance on outdated gradients. Conversely, a smaller \(\tau\) reduces staleness bias but increases noise error due to fewer aggregated gradients, making the algorithm more sensitive to noise fluctuations. Lemma~\ref{lemma1} shows that when the clipping threshold is small, the impact of \( d_\tau \) becomes negligible. When the time interval \(\tau\) is large, although the magnitude of the gradients is constrained by the clipping threshold, significant variations may still occur among the individual gradient components. These variations can lead to substantial fluctuations in both the direction and magnitude of the model updates. In contrast, when \(\tau\) is small--meaning over a shorter time interval--the changes in the gradient components are relatively minor, allowing \( d_\tau \) to be approximated as 0. 

This leads to a key question: \textit{How can we determine an appropriate decay interval \( \tau \) that minimizes noise variance while keeping gradient staleness under control?} To address this, we present the following remark. 

\begin{remark}\label{remark}
	From Lemma~\ref{lemma2}, It can be observed that as the time interval \(\tau\) increases, the magnitude \(h=\frac{(1-\vartheta)}{(1+\vartheta)} + \frac{2}{(1+\vartheta)}\vartheta^{2\tau-1}\) decreases, thereby reducing the noise error. However, the magnitude \(h\) is bounded by the constant term \(\frac{(1-\vartheta)}{(1+\vartheta)}\), making it unreasonable to increase \(\tau\) indefinitely under this constraint. Based on this observation, we propose a strategy for setting the time interval \(\tau\). Given the parameter \(\vartheta\), we use \(\frac{(1-\vartheta)}{(1+\vartheta)}\) as a limit to determine \(\tau\) such that 
	\(
	h - \frac{(1-\vartheta)}{(1+\vartheta)} < 0.01.
	\)
	The selected value of \(\tau\) will then serve as the time interval for our analysis.
\end{remark}

\section{Theoretical Analysis}\label{Section:5}
In this section, we analyze the performance guarantee of the proposed ADP-VRSGP algorithm. We begin by making the following assumptions regarding the loss function in problem~\eqref{eq:my_function}.

\begin{assumption}\label{assumption:1}
	For each node \(i\), its local loss function \(f\) has a Lipschitz continuous gradient: 
	\begin{equation*}
		\|\nabla f(x) - \nabla f(y)\| \leq L\|x - y\|, \quad \forall x, y \in \mathbb{R}^d.
	\end{equation*}
\end{assumption}
\begin{assumption}\label{assumption:2} For \( \forall b \in D_i \), there exist two finite positive constants $a$ and $m$ such that 
	\begin{equation*}
		\|\nabla f_i(x) - \nabla f(x)\|^2 \leq a^2,
	\end{equation*}
	\begin{equation*}\frac{1}{n} \sum_{i=1}^{n} \|\nabla f_i(x_{i}^{t},b) - \nabla f(x_{i}^{t})\|^2 \leq m^2.\end{equation*}
\end{assumption}

\begin{assumption} \label{assumption:3}
	Each node is its internal neighbor. There exist two finite positive integers J and $\kappa$, such that for $\forall l \geq 0$, the edge set $ {\textstyle \bigcup_{t=lJ}^{(l+1)J-1 }} \varepsilon ^{t} $ forms a graph that is strongly connected and has a diameter not exceeding $\kappa$.
\end{assumption}

Based on \cite{abadi2016deep}, we observe a negative correlation between noise and privacy loss: the larger the noise variance, the smaller the privacy loss, and vice versa. To achieve the desired level of privacy protection, we present a theorem below that explains how to appropriately adjust the noise parameters.

\begin{theorem} \label{thm1}
	There exist two positive constants \( c_1 \) and \( c_2 \). Let \(\varsigma_i = \frac{\|B_i\|}{\|D_i\|}\). For a given total number of iterations \( T \), if the conditions \( \epsilon_i < c_1\varsigma_i^2 T \), \( \delta_i > 0 \), and 
	
	\begin{equation} 
		\sigma_i = \frac{ Gc_2\varsigma_i\sqrt{\ln(1/\delta_i)}}{\epsilon_i} \sqrt{{\textstyle\sum_{t=0}^{T-1} \frac{1}{(\alpha_i^{t})^2}}}.
	\end{equation}
	are satisfied, then node \( i \) obtains \((\epsilon_i, \delta_i)\)-DP protection, where \( N(0, \sigma_{i}^{2} I_d) \) denotes the added Gaussian noise.
\end{theorem}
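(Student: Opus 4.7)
\medskip
\noindent\textbf{Proof proposal.} The plan is to follow the moments-accountant framework of Abadi et al.~\cite{abadi2016deep}, but generalized to time-varying noise scales, since the per-step noise $\alpha_i^{T-t}\sigma_i$ changes with $t$ while the clipping sensitivity $G$ is fixed. First, I would isolate a single iteration: conditioned on the current state $z_i^{t}$, the only data-dependent output is the noisy clipped gradient $\overline{g}_i^{t}=\hat g_i^{t}+\alpha_i^{T-t}u_i^{t}$, which is a subsampled Gaussian mechanism with sensitivity bounded by $G$ (from the clipping step), noise standard deviation $\alpha_i^{T-t}\sigma_i$, and Poisson-style subsampling rate $\varsigma_i=\|B_i\|/\|D_i\|$. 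Its effective noise multiplier is therefore $\rho_t:=\alpha_i^{T-t}\sigma_i/G$.

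Next, I would bound the log-moment (Rényi divergence) contributed by step $t$. By Lemma~3 of \cite{abadi2016deep}, for the subsampled Gaussian mechanism there exist constants $c_1,c_2>0$ such that, whenever $\varsigma_i<c_1$ and $\lambda\le c_1\rho_t^2$, the $\lambda$-th log-moment satisfies $\alpha_{\mathcal{M}_t}(\lambda)\le c_2\varsigma_i^2\lambda(\lambda+1)/\rho_t^2$. Since the local gradient at step $t$ is independent of the Gaussian noises injected at other steps (conditionally on the state), the composability of moments gives
\begin{equation*}
\alpha_{\mathcal{M}}(\lambda)\ \le\ \sum_{t=0}^{T-1}\alpha_{\mathcal{M}_t}(\lambda)\ \le\ \frac{c_2\varsigma_i^2\lambda(\lambda+1)G^2}{\sigma_i^2}\sum_{t=0}^{T-1}\frac{1}{(\alpha_i^{t})^2},
\end{equation*}
after reindexing $T-t\mapsto t$ in the sum.

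Then I would convert the moment bound into an $(\epsilon_i,\delta_i)$-DP statement via the standard tail bound: the mechanism is $(\epsilon_i,\delta_i)$-DP whenever there exists an admissible $\lambda$ with $\exp\bigl(\alpha_{\mathcal{M}}(\lambda)-\lambda\epsilon_i\bigr)\le\delta_i$. Choosing $\lambda=\epsilon_i/(2c_2\varsigma_i^2 G^2\sigma_i^{-2}\sum_t(\alpha_i^{t})^{-2})$ and imposing $\exp(-\lambda\epsilon_i/2)\le\delta_i$ yields, after solving for $\sigma_i$, the condition
\begin{equation*}
\sigma_i\ \ge\ \frac{Gc_2\varsigma_i\sqrt{\ln(1/\delta_i)}}{\epsilon_i}\sqrt{\sum_{t=0}^{T-1}\frac{1}{(\alpha_i^{t})^2}},
\end{equation*}
exactly matching the statement (with $c_2$ absorbing universal numerical factors). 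The admissibility constraint $\lambda\le c_1\rho_t^2$ for every $t$ translates into the hypothesis $\epsilon_i<c_1\varsigma_i^2 T$ after using $\sigma_i^2$ from above and the monotonicity of $(\alpha_i^{t})^{-2}$.

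The main obstacle I anticipate is the time-varying noise multiplier: the Abadi bound is stated for a constant $\sigma$, so I must verify step by step that (i) the per-step moment bound of \cite[Lemma~3]{abadi2016deep} extends to each $\rho_t$ separately, (ii) composition of the $\lambda$-th log-moments holds with heterogeneous mechanisms (which it does because composition is additive in $\alpha_{\mathcal{M}_t}(\lambda)$), and (iii) a single $\lambda$ can be chosen that simultaneously satisfies the admissibility range $\lambda\le c_1\rho_t^2$ for all $t$; this is what forces the tail condition $\epsilon_i<c_1\varsigma_i^2 T$ in the theorem. Once these three points are cleanly justified, the remainder is routine algebra.
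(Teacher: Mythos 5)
Your proposal is correct and follows essentially the same route as the paper: both apply the moments accountant of Abadi et al., bound each step's log-moment for the subsampled Gaussian mechanism with the time-varying noise scale $\alpha_i^{T-t}\sigma_i$, sum by composability, reindex $T-t\mapsto t$, and invoke the standard tail bound to solve for $\sigma_i$. Your write-up is in fact more explicit than the paper's (which only cites a ``slight modification of Lemma 3'' and ``extending Theorem 1'' of that reference), and correctly retains the sensitivity factor $G$ throughout.
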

\begin{proof}
	See Section~\uppercase\expandafter{\romannumeral4} in the supplementary file.
\end{proof}

If \( \alpha_i^{t} = 1 \), then Theorem~\ref{thm1} simplifies to the same form as Theorem 1 in \cite{abadi2016deep}. We now proceed to analyze the convergence guarantee for the proposed ADP-VRSGP algorithm. 

\begin{theorem}\label{theorem2} Suppose that Assumptions \ref{assumption:1}-\ref{assumption:3} hold. Let \( f^* = \min_{x \in \mathbb{R}} f(x) \) and \( \eta = \frac{K\sqrt{n}}{T^{p}} \), and assume that the condition \( \alpha_i^{t} = \sqrt{K} t^s \) holds for a positive constant \( K \). There exist constants \( C \) and \( q \in [0, 1) \) determined by the network diameter $\kappa$ and the sequence of mixing matrices \( P^t \). If $s={\frac{1}{4} - \frac{p}{2}}$ and the total number of iterations satisfies 
	
	\begin{equation*}
		\begin{aligned}
			T &\geq \max\left\{ 4nL^2, \left(\frac{162nC^{2}L^{2}}{1-q^{2}}\right)^{\frac{2}{1+2p}},(nL^2)^{\frac{2}{1+2p}}, \right. \\
			&\quad\quad\quad\quad\quad \left. n^{\frac{2}{1+2p}},  \left(\frac{2K}{nL}\right)^{\frac{2}{1+2p}}, \left(\frac{9}{\sqrt{n}}\right)^{\frac{4}{3-2p}} \right\},
		\end{aligned}
	\end{equation*}

	 we have
	\begin{equation}
		\begin{aligned}
			&\frac{1}{nT}\Sigma_{t=0}^{T-1}\Sigma_{i=1}^{n}E\{\|\nabla f_{i}(z_{i}^{t})\|^{2}\}\leq \underbrace{ \frac{4F_{0}+5LA_{1}}{\sqrt{nT}}}_{\text{fixed terms}} \\
			&+\underbrace{\frac{4hdL(5A_2+1)M}{\sqrt{nT} }}_{\text{noise error}}+\underbrace{\frac{4 +5LA_{3}}{\sqrt{nT}}(\varrho +\upsilon)}_{\text{stale gradient error and clipping bias}},
		\end{aligned}
		\label{T1}
	\end{equation}
	where \[M=\frac{\eta^2}{n^2} {\textstyle\sum_{i=1}^n \frac{G^{2}c_2^{2}\varsigma_i^{2}\ln \left(1/\delta_i\right)}{\epsilon_i^2}}{\textstyle \sum_{t=0}^{T-1}\frac{(\alpha^{(T-t)})^2}{(\beta ^{t})^2}}{\textstyle \sum_{t=0}^{T-1}\frac{1}{(\alpha ^{t})^2}},\] and \[\upsilon=\frac{1}{n}\textstyle\sum_{i=1}^n{\textstyle \sum_{t=0}^{T-1}} {E}   \left \{ \left\| \left( 1 - \min\left\{ 1, \frac{G}{\|g_i^t\|} \right\} \right) g_i^t \right\|^{2} \right \},\] $A_1=\frac{6 C^2\|x_i^{0}\|+108 C^2 F_0+18 C^2\left(m^2+3 a^2\right)}{(1-q)^2}$, $A_2=\frac{6 C^2\left(n+9L\right)}{(1-q)^2},$ $A_3=\frac{110 C^2}{(1-q)^2}$, $F_0 = f(x_0) - f^*$, $h \!=\!\frac{(1-\vartheta)}{(1+\vartheta)} + \frac{2}{(1+\vartheta)}\vartheta^{2\tau-1}$ and $\varrho ={\textstyle \sum_{t=0}^{T-1}} {E}   \left \{  \|d_{\tau}^t\|^{2} \right \}$.
\end{theorem}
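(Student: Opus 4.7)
The plan is to follow a standard descent-lemma analysis adapted to the push-sum setting, with careful bookkeeping for the adaptive DP noise, the PGF aggregation, and the gradient clipping bias. First, I would introduce the virtual node-average sequence $\bar{x}^t = \tfrac{1}{n}\sum_i x_i^t$ (or its debiased counterpart $\bar z^t$) and apply the $L$-smoothness of $f$ from Assumption~\ref{assumption:1} to obtain a one-step descent inequality of the form $E\{f(\bar{x}^{t+1})\}\le E\{f(\bar{x}^t)\}-\tfrac{\eta_i^t}{n}\sum_i E\{\langle \nabla f(\bar{x}^t),\tilde g_i^t\rangle\}+\tfrac{L(\eta_i^t)^2}{2n^2}E\{\|\sum_i\tilde g_i^t\|^2\}$. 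Decomposing $\tilde g_i^t=\hat g_i^t+d_\tau^t+\alpha^{T-t}u_i^t$ as the paper already does lets the cross term with $\nabla f(\bar x^t)$ be turned, via Young's inequality and Assumption~\ref{assumption:2}, into $-\tfrac{\eta_i^t}{2n}\sum_i E\{\|\nabla f_i(z_i^t)\|^2\}$ plus a consensus residual $\|\nabla f(\bar x^t)-\nabla f_i(z_i^t)\|^2$ and the clipping bias $\upsilon$.

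Next, I would control the consensus error using Assumption~\ref{assumption:3}. The standard push-sum contraction (as in the SGP analysis of \cite{pmlr-v97-assran19a}) yields $\|x_i^t-\bar x^t w_i^t\|\le C\sum_{r=0}^{t-1}q^{t-1-r}\|\sum_j\eta_j^r\tilde g_j^r\|$ with $C$ and $q\in[0,1)$ determined by $\kappa$, $J$ and the sequence $\{P^t\}$. Squaring and summing this geometric series against the iteration-dependent scalings $\eta_i^t=\eta/\beta_i^t$ and $\alpha^{T-t}\sigma_i$ produces exactly the constants $A_1,A_2,A_3$, while isolating the Gaussian variance contribution (after substituting $\sigma_i$ from Theorem~\ref{thm1}) assembles the $M$ term, since $M$ is precisely $\tfrac{\eta^2}{n^2}\sum_i\sigma_i^2\sum_t(\alpha^{T-t}/\beta^t)^2$ with $\sigma_i^2\propto\sum_t1/(\alpha^t)^2$.

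The PGF contributions are then inserted via Lemmas~\ref{lemma1} and~\ref{lemma2}: every appearance of $E\{\|d_\tau^t\|^2\}$ and $E\{\|\alpha^{T-t}u_i^t\|^2\}$ in the squared-norm terms picks up the factor $h=\tfrac{1-\vartheta}{1+\vartheta}+\tfrac{2}{1+\vartheta}\vartheta^{2\tau-1}$, which is what couples the noise-error term $M$ with $h$ in (\ref{T1}) and what gives the staleness term $\varrho$. Telescoping the descent inequality from $t=0$ to $T-1$, dividing by $\tfrac{\eta T}{2}$, and collecting terms produces a bound of the form $\tfrac{1}{nT}\sum_{t,i}E\{\|\nabla f_i(z_i^t)\|^2\}\le \tfrac{2F_0}{\eta T}+\tfrac{L\eta}{n}(\text{consensus}+\text{noise}+\text{staleness/clipping})$. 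Substituting $\eta=K\sqrt n/T^p$ converts this into the stated $1/\sqrt{nT}$ rate provided the exponent $s$ in $\alpha_i^t=\sqrt K t^s$ is chosen to balance the two sums $\sum_t(\alpha^{T-t})^2/(\beta^t)^2$ and $\sum_t 1/(\alpha^t)^2$ hidden inside $M$ against the $L\eta$ prefactor; a short calculation shows this balance requires $s=\tfrac14-\tfrac p2$, exactly as assumed. The lower bounds on $T$ are then read off by demanding that each higher-order residual (the $1-q^2$ consensus denominator, the $L^2$-Lipschitz terms, and the $\eta\cdot L\eta$ self-coupling term from bounding $\|\sum_i\tilde g_i^t\|^2$) be dominated by the leading $1/\sqrt{nT}$ rate.

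The hard part will be the push-sum consensus estimate under iteration-varying step sizes $\eta_i^t=\eta/\beta_i^t$ and growing noise $\alpha^{T-t}\sigma_i$: producing a clean geometric bound $\|x_i^t-\bar x^t w_i^t\|^2\lesssim C^2\sum_r q^{2(t-r)}(\cdot)$ that remains summable when convolved with both the $\beta^t$-scaled updates and the $h$-scaled PGF noise is the step that really pins down $A_1,A_2,A_3$. Once that estimate is secured, the remainder is bookkeeping: apply Lemmas~\ref{lemma1}--\ref{lemma2} to insert $h$, substitute the DP-calibrated $\sigma_i$ from Theorem~\ref{thm1}, and verify that the stated lower bound on $T$ absorbs every lower-order term.
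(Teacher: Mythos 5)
Your plan follows essentially the same route as the paper's proof: an $L$-smoothness descent inequality on the node-average iterate, the decomposition $\tilde g_i^t=\hat g_i^t+d_\tau^t+\upsilon_i^t+\alpha^{T-t}u_i^t$ with the zero-mean noise killing the cross term, the push-sum contraction $\|z_i^t-\bar x^t\|\le Cq^t\|x_i^0\|+C\sum_s \eta^s q^{t-s}\|\tilde g_i^s\|$ to control the consensus error $O^t$ and produce $A_1,A_2,A_3$, insertion of the factor $h$ via Lemmas~\ref{lemma1}--\ref{lemma2}, substitution of the DP-calibrated $\sigma_i$ to assemble $M$, and the choice $s=\tfrac14-\tfrac p2$ with the $T$ lower bounds ensuring the recursion coefficient stays bounded away from zero. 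The only cosmetic difference is that the paper keeps $\|\nabla f(\bar x^t)\|^2$ throughout and converts to $\|\nabla f_i(z_i^t)\|^2$ only at the final step via $2L^2 O^t$, whereas you fold that conversion into the descent step; this does not change the argument.
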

\begin{proof}
	See APPENDIX.
\end{proof}

In highly dynamic or adversarial scenarios—characterized by frequent link failures and malicious packet drops or tampering—the model information aggregated at each node becomes non-uniform, introducing a systematic bias into the optimization process. To counteract this, we introduce a debiasing parameter, $z$, and rigorously characterize its statistical properties. We then derive a tight convergence upper bound for ADP-VRSGP, revealing its pronounced sensitivity to the network structure. Specifically, as the communication topology grows more intricate, the number of iterations required for global information dissemination increases sharply; this escalation is quantified by the connectivity parameter $J$ (Assumption~\ref{assumption:3}). Theorem~\ref{theorem2} captures this effect through the propagation-efficiency parameter $q$ (formally defined in \eqref{15}), which is proportional to $J$. A large $q$ corresponds to sparse, highly dynamic topologies, forcing the iteration complexity $T$ to scale linearly with $q$, whereas a small $q$ reflects an efficient and stable propagation environment. Theorem~\ref{theorem2} thus provides an interpretable rule for adaptively tuning the iteration budget based on the measured $q$, ensuring optimal convergence across diverse network conditions. Moreover, we establish a sharp limiting result: if any subset of agents becomes permanently isolated, $q\to 1$, deriving $T\to\infty$ and causing the algorithm to converge only to a neighborhood solution rather than the global optimum. This finding explicitly delineates the theoretical limitations of distributed learning frameworks that rely on connectivity assumptions.

\begin{remark}\label{remark:2}
	From Theorem~\ref{theorem2}, it is evident that the first term in the numerator of (\ref{T1}) represents the fixed bias introduced by the decentralized learning process, while the subsequent two terms correspond to the noise error, stale gradient error, and clipping bias. Notably, there exists a negative correlation between \( h \) (in the noise error) and \( d_{\tau} \) (in the stale gradient error). Specifically, for a given \( \vartheta \), as the interval $\tau$ for the decaying step size increases, the value of \( h \) decreases, thereby reducing the noise error. However, this reduction in \( h \) simultaneously increases \( d_{\tau} \) in the stale gradient error. Consequently, an optimal balance between \( h \) and \( d_{\tau} \) must be achieved.
\end{remark}

\begin{figure*}[htbp]
\begin{corollary}\label{corollary}
	When \( \tau = 1 \) and under the absence of gradient clipping, the error bound in (\ref{T1}) can be refined through parameter \( p \) as follows: There exist positive constants $H_1$, $H_2$, and $H_3$, such that
	\begin{equation} \label{C1}
		\begin{aligned}
			 \frac{1}{nT}\sum_{t=0}^{T-1}\sum_{i=1}^{n}& E\left[\|\nabla f_{i}(z_{i}^{t})\|^{2}\right] \\
			& \leq \frac{4F_{0}+5LA_{1}}{\sqrt{nT}} + 
			\begin{cases}
				\sqrt{\frac{T}{n}} \cdot \left(\nu H_1^2 \frac{1}{n} {\textstyle\sum_{i=1}^n \frac{G^{2}c_2^{2}\varsigma_i^{2}\ln \left(1/\delta_i\right)}{\epsilon_i^2}} \right) &  p \in (0,1/2), \\
				\frac{(\log T)^2}{\sqrt{nT}} \cdot \left(\nu H_2^2 \frac{1}{n} {\textstyle\sum_{i=1}^n \frac{G^{2}c_2^{2}\varsigma_i^{2}\ln \left(1/\delta_i\right)}{\epsilon_i^2}} \right)  &  p = 0, \\
				\frac{1}{\sqrt{n}T^{\frac12 + 2p}} \cdot \left(\nu  H_3^2 \frac{1}{n} {\textstyle\sum_{i=1}^n \frac{G^{2}c_2^{2}\varsigma_i^{2}\ln \left(1/\delta_i\right)}{\epsilon_i^2}} \right) &  p \in (-1/2,0),
			\end{cases}
		\end{aligned}
	\end{equation}
\end{corollary}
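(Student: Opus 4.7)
The plan is to specialize Theorem~\ref{theorem2} to the setting $\tau = 1$ without gradient clipping, and then evaluate the noise term $4dL(5A_2+1)M/\sqrt{nT}$ asymptotically in each of the three regimes of $p$.

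First, I would reduce (\ref{T1}) to the fixed term plus the noise term alone. With $\tau = 1$ and $\alpha_i^t = \sqrt{K} t^s$ strictly increasing in $t$ (since $s = 1/4 - p/2 > 0$ for $p < 1/2$), the PGF activation condition $\alpha^{T-t} = \alpha^{T-t+1}$ in Algorithm~\ref{algorithm:2} is never satisfied, so $\tilde g_i^t = \bar g_i^t$, the staleness bias satisfies $\varrho = 0$, and the coefficient $h = (1-\vartheta)/(1+\vartheta) + 2\vartheta^{2\tau-1}/(1+\vartheta)$ collapses to $1$ at $\tau = 1$. Absence of gradient clipping (formally $G \to \infty$, so $\min\{1, G/\|g_i^t\|\} = 1$) gives $\upsilon = 0$. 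Only the fixed-term contribution and a noise term proportional to $M/\sqrt{nT}$ survive.

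Next, I would expand $M$ by substituting $\eta = K\sqrt{n}/T^p$, $\alpha_i^t = \sqrt{K} t^s$ with $s = 1/4 - p/2$, and the piecewise $\beta_i^t$ from (\ref{learning-rate-coefficient}). The result reduces to two scalar sums, $\Sigma_2 = \sum_{t=1}^{T-1} t^{-2s}/K$ coming from $\sigma_i^2$ in Theorem~\ref{thm1}, and $\Sigma_1 = \sum_{t=1}^{T-1} (\alpha^{T-t})^2/(\beta^t)^2$ whose two pieces ($t \leq \Xi T$ and $t > \Xi T$) must be bounded separately because $\beta^t$ is piecewise. I would estimate both with the integral test, using $\sum_{t=1}^{T-1} t^{-a} \asymp T^{1-a}/(1-a)$ when $a \in (0,1)$ and $\sum_{t=1}^{T-1} t^{-1} \asymp \log T$ at the critical exponent $a = 1$, which in this parameterization corresponds to $p = 0$.

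Plugging the sum estimates into $M/\sqrt{nT}$ and collecting powers of $T$ in each regime then produces the three stated rates. For $p \in (0, 1/2)$, both exponents are strictly subcritical, $\Sigma_1 \Sigma_2 \asymp T^{1+2p}$, and the product $\eta^2 \Sigma_1 \Sigma_2 / n$ scales as $\Theta(T)$, yielding noise rate $\sqrt{T/n}$. At $p = 0$, the critical exponent introduces a $\log T$ factor; because a square-root of $\Sigma_2$ feeds into $\sigma_i$ while $\Sigma_2$ reappears in the noise variance and also in $\Sigma_1$, the net effect is a squared-log correction, yielding the $(\log T)^2/\sqrt{nT}$ rate. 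For $p \in (-1/2, 0)$, the exponents in $\Sigma_1$ and $\Sigma_2$ rearrange to give the sharper rate $1/(\sqrt{n} T^{1/2+2p})$. The constants $\nu$ and $H_1, H_2, H_3$ are obtained by absorbing the $K$-, $\Xi$-, and $(1-2s)^{-1}$-type prefactors from the integral estimates; the normalization $\tfrac{1}{n}$ in front of $\sum_i G^2 c_2^2 \varsigma_i^2 \ln(1/\delta_i)/\epsilon_i^2$ is extracted by writing $\sum_{i=1}^n c_i = n \cdot (n^{-1}\sum_i c_i)$.

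The main obstacle is the clean separation of the three regimes, and particularly the extraction of the logarithmic correction at the boundary $p = 0$, where the critical exponent appears simultaneously in $\Sigma_2$ and in the $t > \Xi T$ piece of $\Sigma_1$. One also has to confirm that the $(1-2s)^{-1}$ prefactors stay uniformly bounded when $p$ is bounded away from $0$, so that $H_1$ and $H_3$ are honest constants rather than blowing up at the boundary; this determines the open intervals $(0,1/2)$ and $(-1/2,0)$ for which each non-boundary rate is valid.
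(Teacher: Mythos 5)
Your proposal matches the paper's proof in essence: the paper likewise specializes Theorem~\ref{theorem2} and reduces the corollary to an integral-test estimate of $\sum_{t}(\alpha_i^t)^2$-type sums, splitting into the sub-critical ($H_1 T^{1-2s}$), critical ($H_2\log T$), and super-critical ($H_3$) regimes before substituting back into (\ref{T1}); the squared constants $H_k^2$ arise exactly as you describe, from the product of the two $t$-sums inside $M$. If anything your write-up is more explicit than the paper's, which omits the reduction $h=1$, $\varrho=\upsilon=0$ and the piecewise-$\beta_i^t$ bookkeeping entirely and simply concludes by substituting the three integral estimates into Theorem~\ref{theorem2}.
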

where $\nu =4^2hdL(3A_2+1). $
\begin{proof}
	See Section~\uppercase\expandafter{\romannumeral5} in the supplementary file.
\end{proof}
\end{figure*}

When the gradients are clipped (or when the clipping threshold is smaller than the gradient norm), \( d_{\tau} \) approaches zero. However, a smaller clipping threshold increases the clipping bias, which adversely affects model performance and convergence speed. Interestingly, recent advances in differentially private learning demonstrate that state-of-the-art results are often achieved with very small clipping thresholds. For example, when the clipping threshold is set to 0.1, Li et al.~\cite{li2022large} empirically validated that a clipping threshold of 0.1 yields optimal convergence for GPT-2 and RoBERTa models on NLP benchmarks (e.g., QNLI, MNLI, and SST-2). Similarly, Bu et al.~\cite{NEURIPS2023_8249b30d} and Tramer \& Boneh~\cite{tramer2021differentially} observed the same phenomenon for SimCLR, ResNeXt-29, and SimCLRv2 on CIFAR-10. Further support comes from \cite{kurakin2022toward,mehta2022large}, where ResNet and Vision Transformer models achieved peak convergence on ImageNet under tight clipping. These findings corroborate the robustness of our algorithm’s convergence guarantees in DP-preserving decentralized learning.

In differentially private learning, the primary challenge lies in balancing privacy protection through noise injection with its adverse effects on model convergence and accuracy. The introduction of noise typically amplifies convergence errors, leading to wider convergence bounds and degraded algorithmic performance. Recent work has thus focused on optimizing error bounds in DP mechanisms to mitigate this trade-off. For time-varying directed communication frameworks, the current state-of-the-art noise bound is $O(\sqrt{T/n})$ \cite{ijcai2024p635}. However, as demonstrated by Corollary~\ref{corollary}, our proposed method not only matches this optimal utility bound but also achieves strictly superior bounds: \( O\left(\frac{(\log T)^2}{\sqrt{nT}}\right) \) when \( p = 0 \), and \( O\left(\frac{1}{\sqrt{n}T^{\frac12 + 2p}} \right) \) when \( p \in (-\frac{1}2, 0) \).

\renewcommand{\arraystretch}{1.5}  
\setlength{\tabcolsep}{8pt}

\begin{table*}[htb]
	\caption{Test accuracy (\%) across different datasets under varying privacy budgets.}
	\centering
	\renewcommand{\arraystretch}{1.5} 
	\begin{tabular}{l c c c c c c c c c}
		\hline
		\rule{0pt}{2.5ex}
		\textbf{Dataset} & \textbf{Setting} & \textbf{A(DP)\(^2\)SGD} & \textbf{MAPA} & \textbf{PPSGD} & \textbf{DFedSAM} &  {\textbf{Fed-CDP}} &  {\textbf{FedDPA}} &  {\textbf{FedAN}} & \textbf{ADP-VRSGP} \\
		\hline
		\multirow{4}{*}{MNIST} & $\epsilon = 2$ & 90.92 & 93.08 & 89.05 & 93.06&  {91.61}&  {93.27}&  {92.81}& \textbf{95.29}( {$\uparrow$ 2.02}) \\
		& $\epsilon = 4$ & 92.07 & 94.38 & 90.71 & 94.89&  {92.35}&  {94.85}&  {94.25}& \textbf{96.09}( {$\uparrow$ 1.20}) \\
		& $\epsilon = 6$ & 93.68 & 95.24 & 91.24 &  95.97&  {94.22}&  {95.47}&  {94.93}& \textbf{97.36}( {$\uparrow$ 1.39}) \\
		& $\epsilon = 8$ & 95.37 & 97.40 & 94.25 & 97.15&  {95.53}&  {96.34}&  {96.28}& \textbf{98.04}( {$\uparrow$ 0.89}) \\
		\hline
		\multirow{4}{*}{Fashion-MNIST} & $\epsilon = 2$ & 70.33 & 71.41 & 70.35& 71.37&  {71.61}&  {72.15}&  {71.57}& \textbf{74.61}( {$\uparrow$ 2.46}) \\
		& $\epsilon = 4$ & 73.41 & 73.92 & 71.81 & 74.25&  {73.84}&  {75.09}&  {74.68}& \textbf{77.38}( {$\uparrow$ 2.29}) \\
		& $\epsilon = 6$ & 75.95 & 76.36 & 74.55 & 77.86&  {76.37}&  {78.54}&  {76.96}& \textbf{80.52}( {$\uparrow$ 1.98}) \\
		& $\epsilon = 8$ & 79.89 & 80.93 & 80.34 & 81.69&  {80.49}&  {82.62}&  {80.35}& \textbf{84.15}( {$\uparrow$ 1.53}) \\
		\hline
		\multirow{4}{*}{CIFAR-10} & $\epsilon = 2$ & 35.47 & 43.67 & 44.81& 45.36&  {44.26}&  {45.66}&  {45.18}& \textbf{47.27}( {$\uparrow$ 1.61})\\
		& $\epsilon = 4$ & 38.35 & 46.98 & 47.53 & 48.22&  {47.13}&  {48.31}&  {47.29}& \textbf{49.35}( {$\uparrow$ 1.04}) \\
		& $\epsilon = 6$ & 40.47 & 50.67 & 51.82 & 50.63&  {52.08}&  {52.54}&  {51.83}& \textbf{53.56}( {$\uparrow$ 1.02}) \\
		& $\epsilon = 8$ & 47.36 & 56.05 & 55.29 & 56.93&  {55.31}&  {57.72}&  {56.79}& \textbf{58.56}({$\uparrow$ 0.84}) \\
		\hline
	\end{tabular}
	\label{tab:1}
\end{table*}

\section{Experimental Evaluation}\label{Section:6}
\subsection{Experiments Setup} \label{ex_set}
\textbf{Datasets and models}. We use Convolutional Neural Networks (CNN) as the base model to evaluate the performance of different algorithms across multiple datasets: MNIST, Fashion-MNIST, and CIFAR-10. Each dataset is split into a training set and a test set, with the training set containing 60,000 images(50,000 for CIFAR-10) and the test set containing 10,000 images. The MNIST dataset consists of handwritten digit images, each 28x28 pixels in grayscale, primarily used for digit classification tasks. The Fashion-MNIST dataset consists of clothing images across 10 categories, with each image being a 28x28 pixel grayscale image, designed to challenge image classification tasks. The CIFAR-10 dataset includes 10 categories of color images, with each category containing 6,000 images of size 32x32 pixels, and is widely used in image recognition and computer vision research. For the MNIST and Fashion-MNIST datasets, we design a CNN model with two convolutional layers and two fully connected layers. For the CIFAR-10 dataset, we design a CNN model with three convolutional layers and two fully connected layers.

\textbf{Baselines}. We evaluate our proposed ADP-VRSGP algorithm against four state-of-the-art approaches: A(DP)\(^2\)SGD~\cite{9524471}, MAPA~\cite{10316599}, PPSGD~\cite{bietti2022personalization}, DFedSAM~\cite{shi2023improving}, Fed-CDP~\cite{wei2023securing}, FedDPA~\cite{yang2023dynamic} and FedAN~\cite{xue2023differentially}. A(DP)\(^2\)SGD implements a Gaussian mechanism in decentralized communications to guarantee sample-level DP. MAPA employs a multi-stage adaptive DP algorithm to enhance model utility. PPSGD improves the diversity between local and global models to mitigate noise-induced errors. DFedSAM leverages sharpness-aware minimization to develop robust local flat models that maintain stability under Gaussian noise perturbations. Fed-CDP uses dynamic clipping bounds and gradient norms for dynamic noise injection. FedDPA leverages Fisher information-based sparsification to reduce noise variance. FedAN dynamically adjusts noise levels based on local and global historical information.

\textbf{Implementation details}. We deploy \( n = 8 \) computational nodes in a decentralized network topology, where each node maintains connections with two immediate neighbors. Our experimental configuration uses: clipping threshold \( G=0.1 \) with decay factor \( \psi =0.99\), local batch size of 64, and privacy specification \( \delta = 10^{-5} \). The data heterogeneity is modeled via Dirichlet distribution with concentration parameter \( \alpha \). We set the total number of iterations to \( T = 1000 \) and define the noise intensity coefficient as \( \alpha_i = \left( \lfloor {t}/{\tau} \rfloor + 10 \right)^s \), where \( \tau = 5 \), \( s = 0.2 \), and \( \vartheta = 0.5 \). All implementations are developed in PyTorch and executed on NVIDIA A100 GPU hardware.

\subsection{Performance Comparison with Baselines}
We conduct a systematic comparative analysis of the test accuracy between our proposed ADP-VRSGP algorithm and baseline methods across varying privacy budgets (\(\epsilon = 2, 4, 6, 8\)). Table~\ref{tab:1} summarizes the results on MNIST, Fashion-MNIST, and CIFAR-10, revealing consistent performance advantages. Specifically, for increasing \(\epsilon\) values, ADP-VRSGP achieves accuracy improvements over the baselines of 2.02\%, 1.20\%, 1.39\%, and 0.89\% on MNIST; 2.46\%, 2.29\%, 1.98\%, and 1.53\% on Fashion-MNIST; and 1.61\%, 1.04\%, 1.02\%, and 0.84\% on CIFAR-10. Notably, ADP-VRSGP maintains robust performance even under stringent privacy constraints (\(\epsilon = 2\)), where noise interference is more pronounced. This demonstrates the algorithm's dual capability to: 1) preserve strong privacy guarantees, while 2) maintaining model accuracy by minimizing noise-induced optimization errors. The consistent performance gains across all test datasets and privacy levels validate ADP-VRSGP's effectiveness in balancing the privacy-utility trade-off---a critical requirement for practical privacy-preserving decentralized learning scenarios.

\begin{figure*}[htb]
	\centering
	\subfloat[$s=0.2$]{\includegraphics[width=2in]{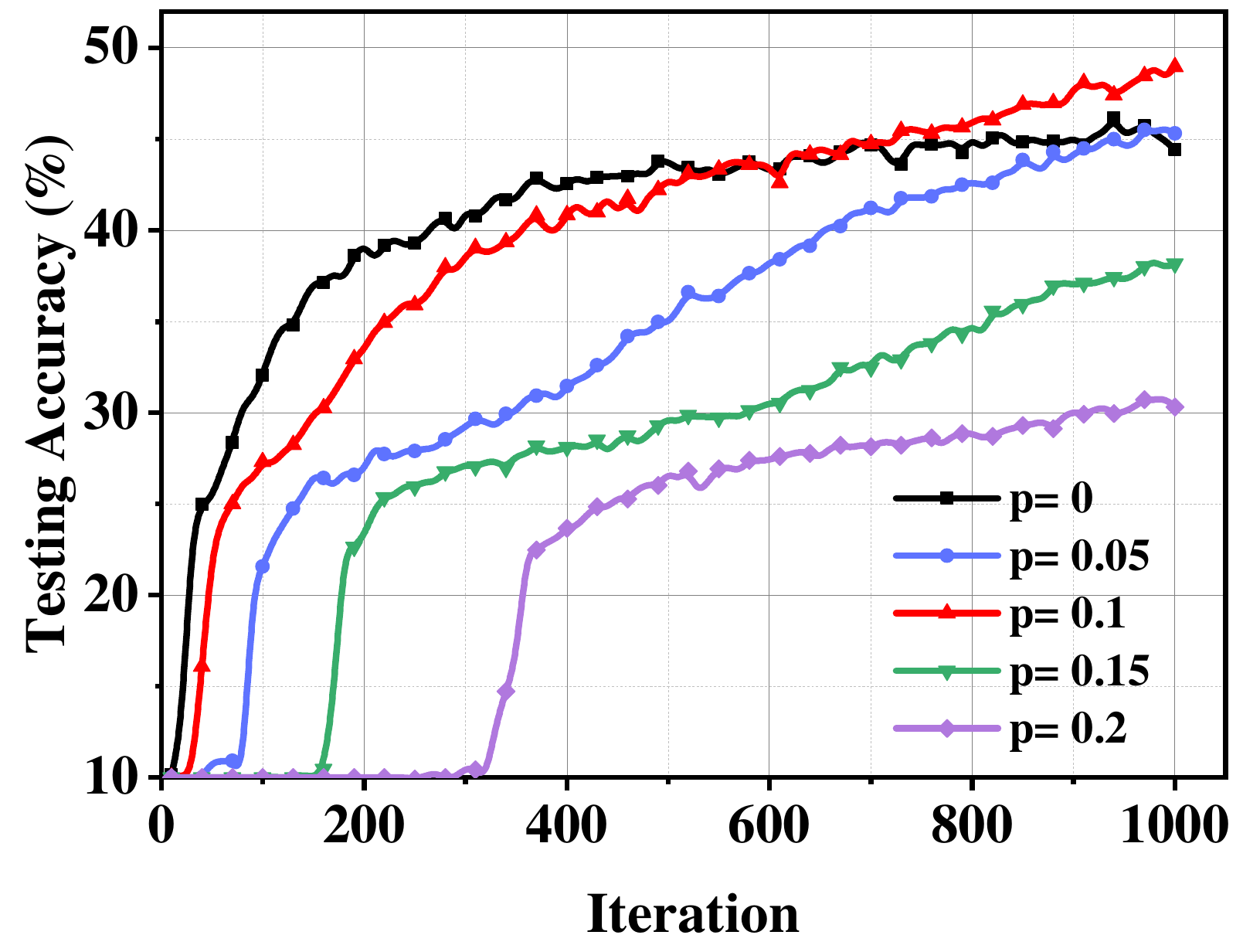}} \hfil
	\subfloat[$s=0.25$]{\includegraphics[width=2in]{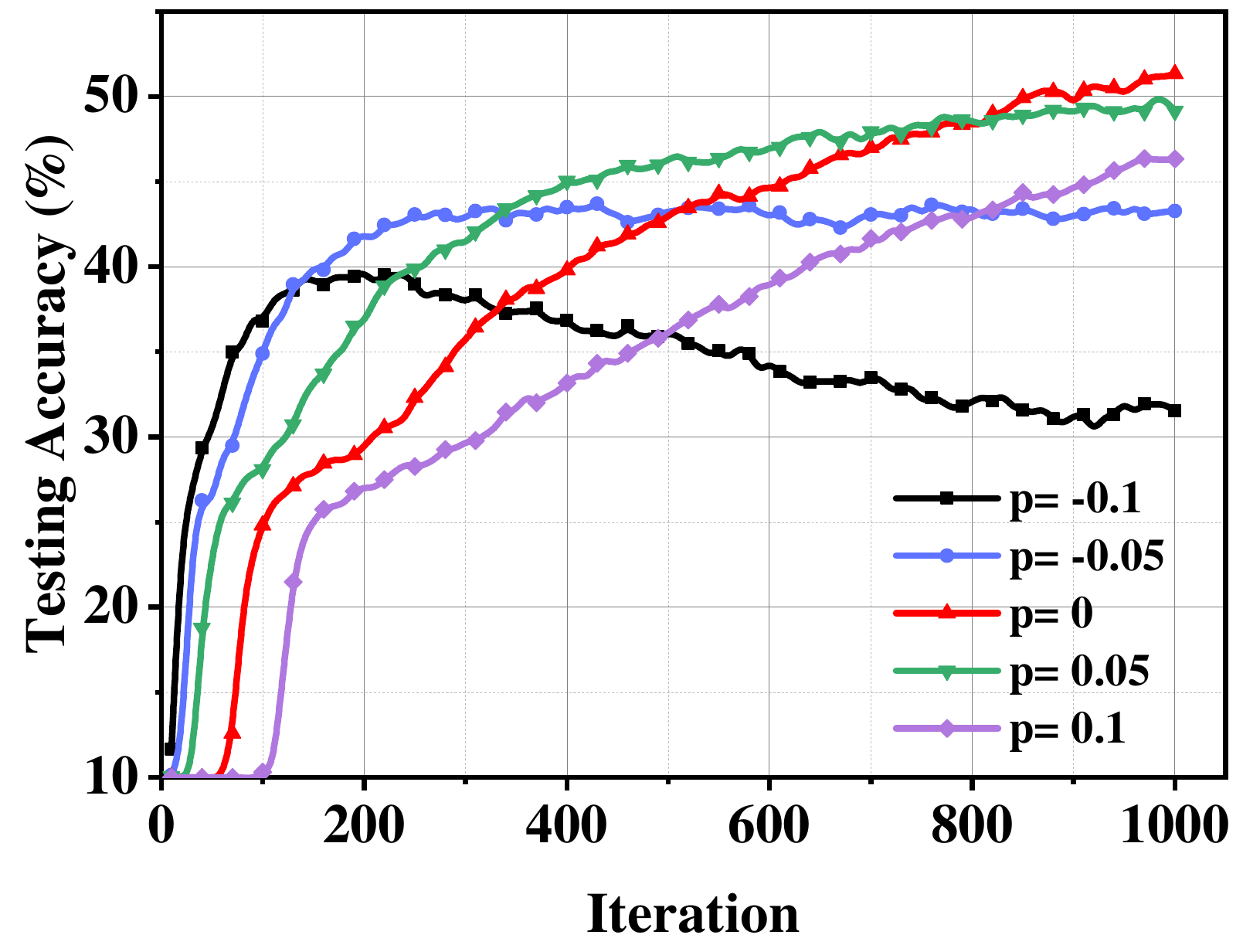}} \hfil
	\subfloat[$s=0.3$]{\includegraphics[width=2in]{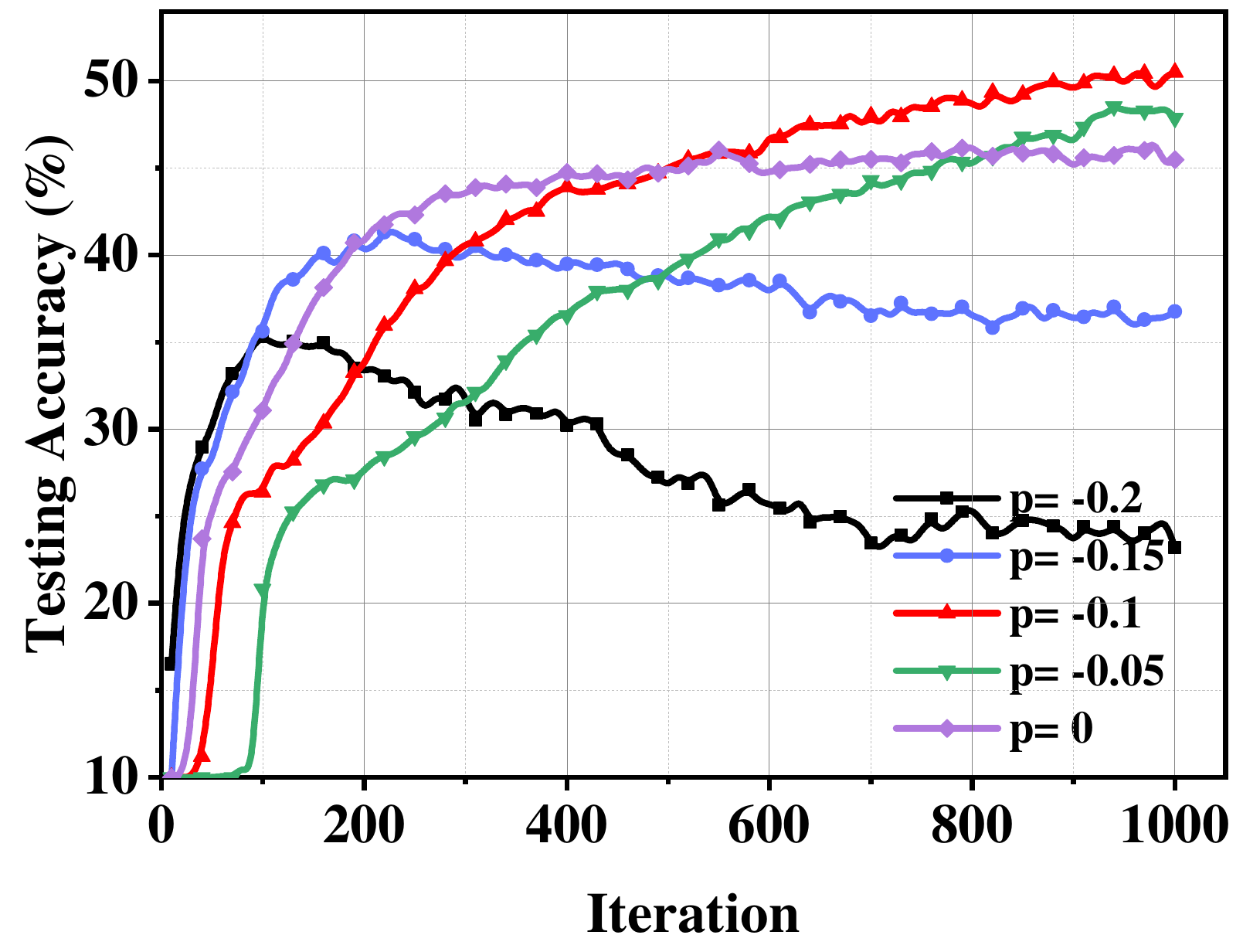}} \\
	\subfloat[$p/(s=0.2)$]{\includegraphics[width=2in]{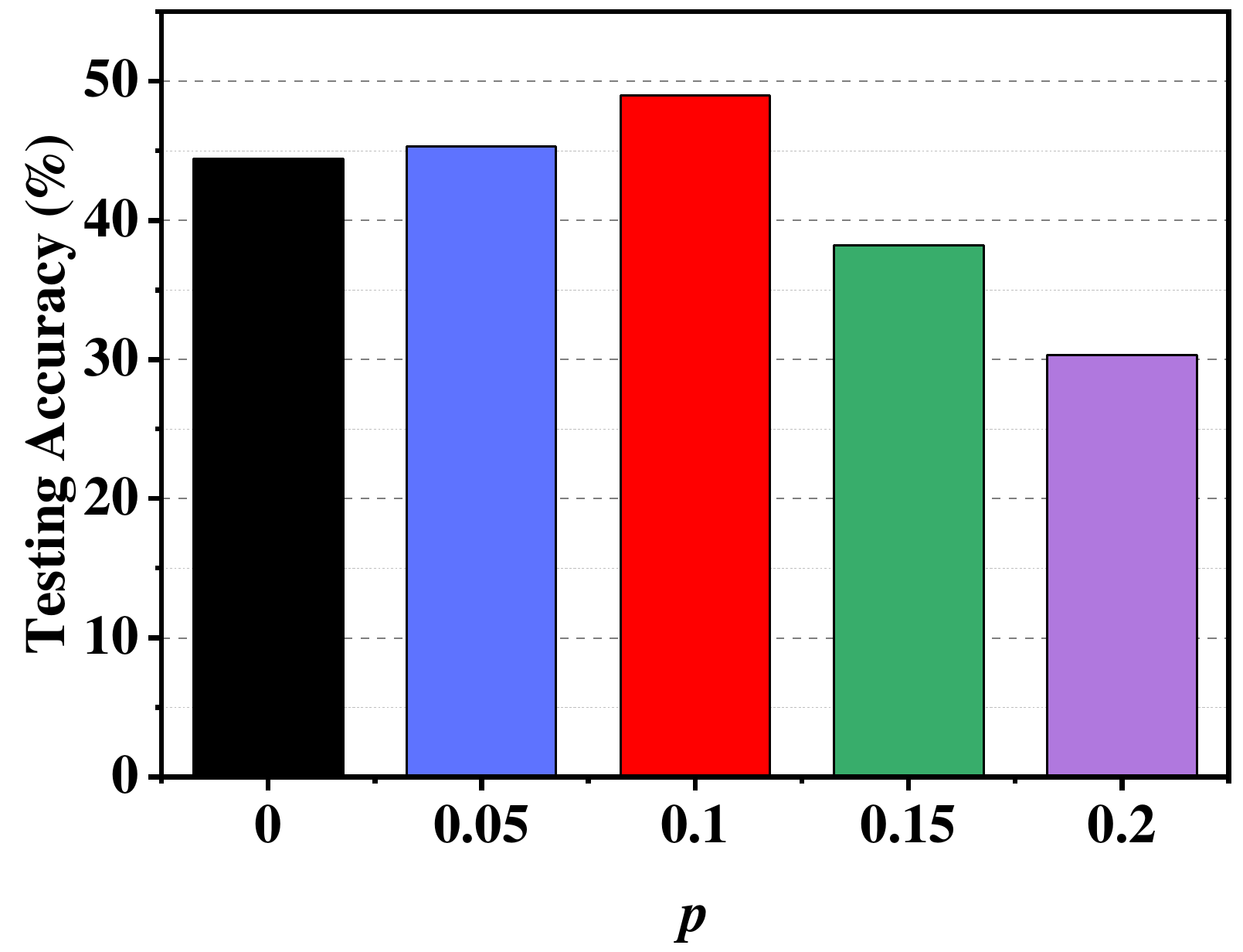}} \hfil
	\subfloat[$p/(s=0.25)$]{\includegraphics[width=2in]{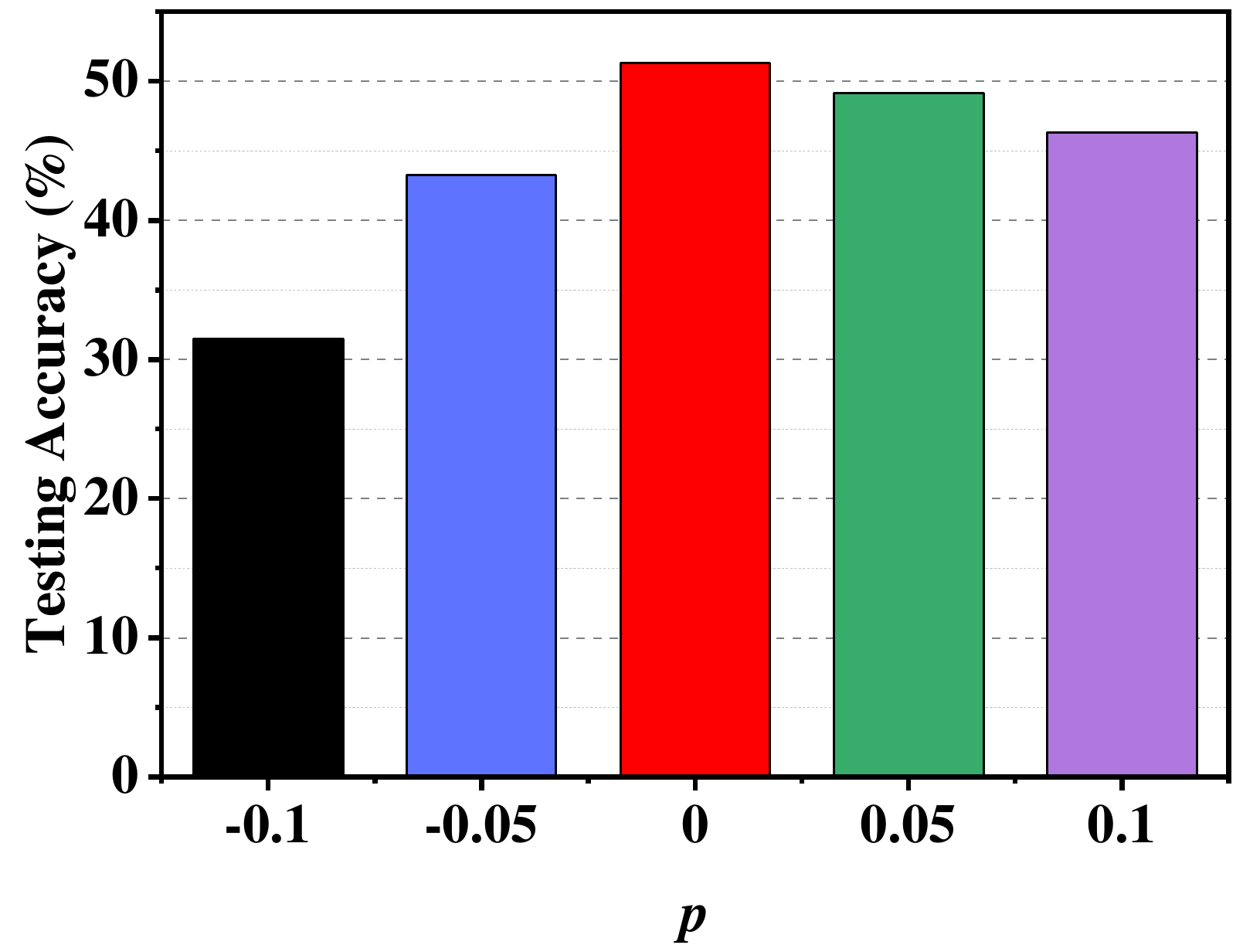}} \hfil
	\subfloat[$p/(s=0.3)$]{\includegraphics[width=2in]{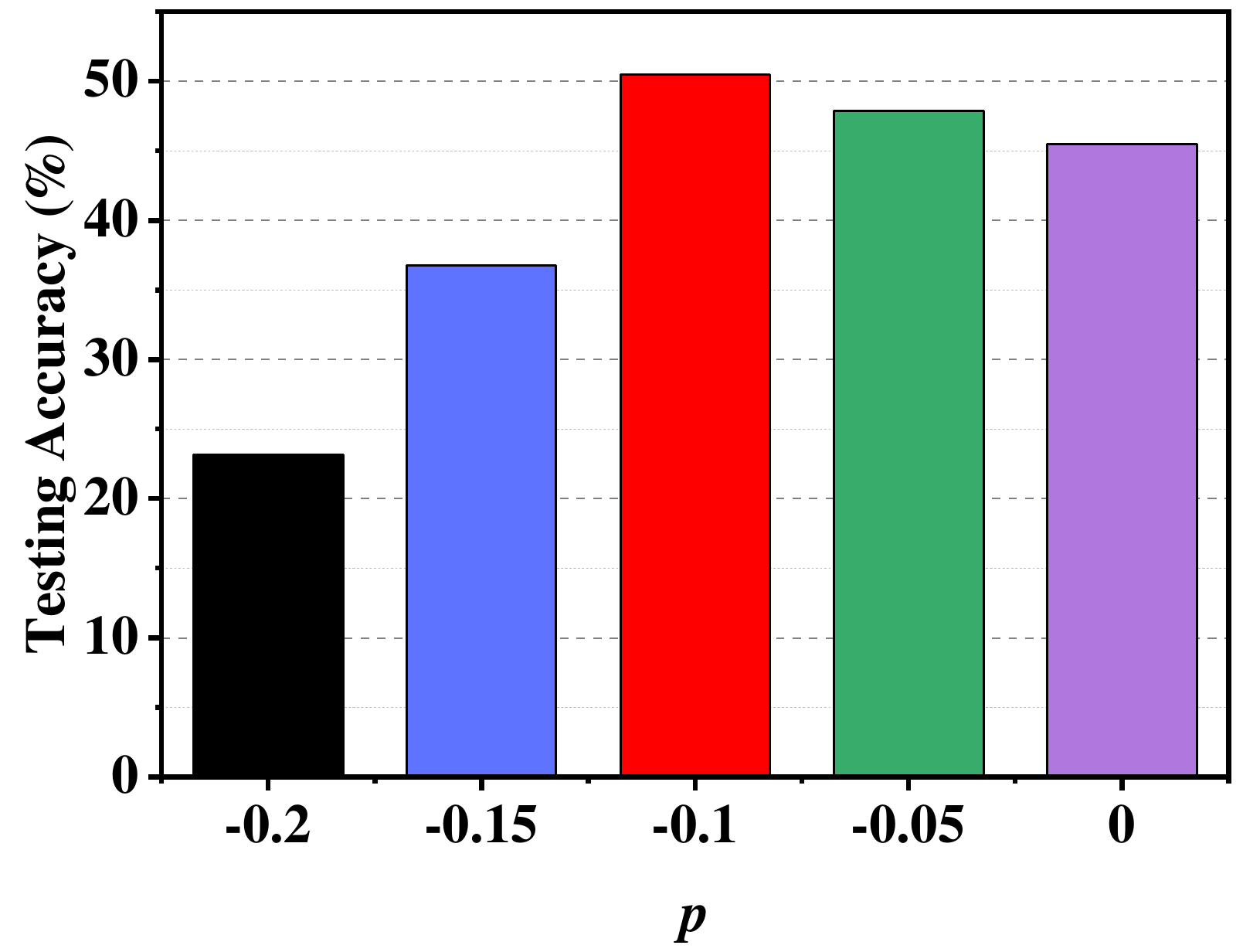}}
	\caption{Test accuracy (\%) on the CIFAR-10 dataset under different learning rates.}
	\label{fig:figure1}
\end{figure*}
\begin{figure*}[htb]
	\centering
	\subfloat[MNIST]{\includegraphics[width=2in]{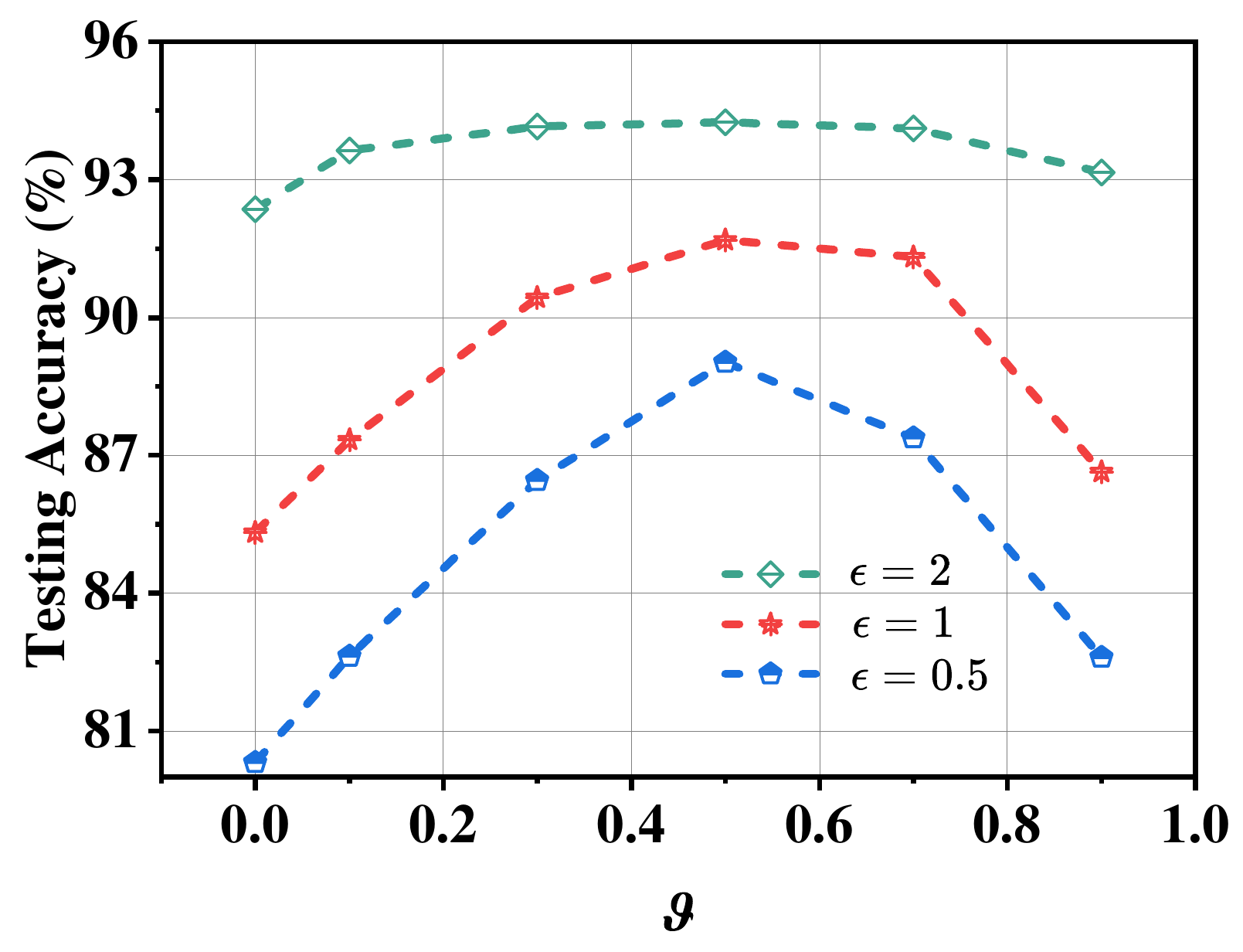}} \hfil
	\subfloat[Fashion-MNIST]{\includegraphics[width=2in]{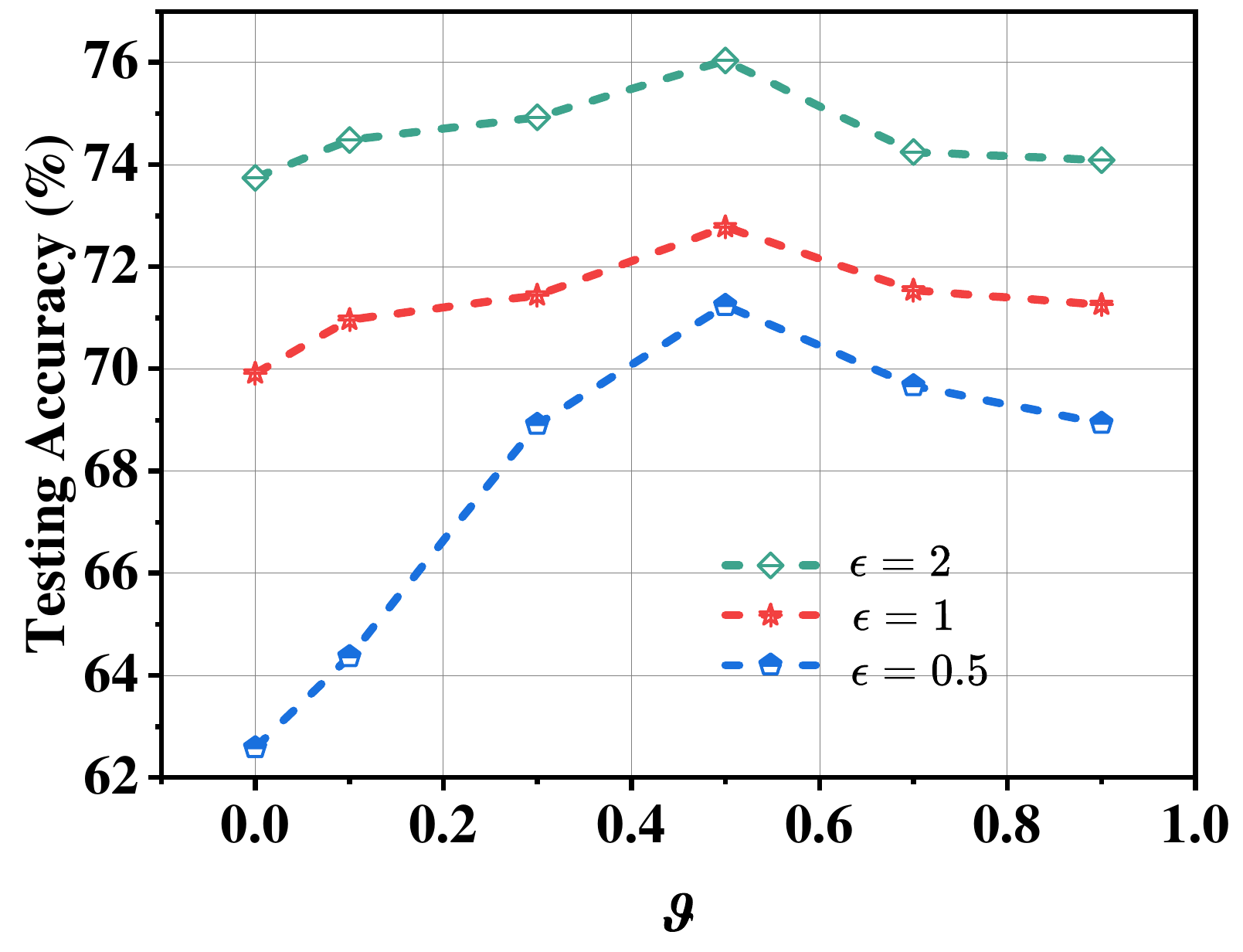}} \hfil
	\subfloat[CIFAR-10]{\includegraphics[width=2in]{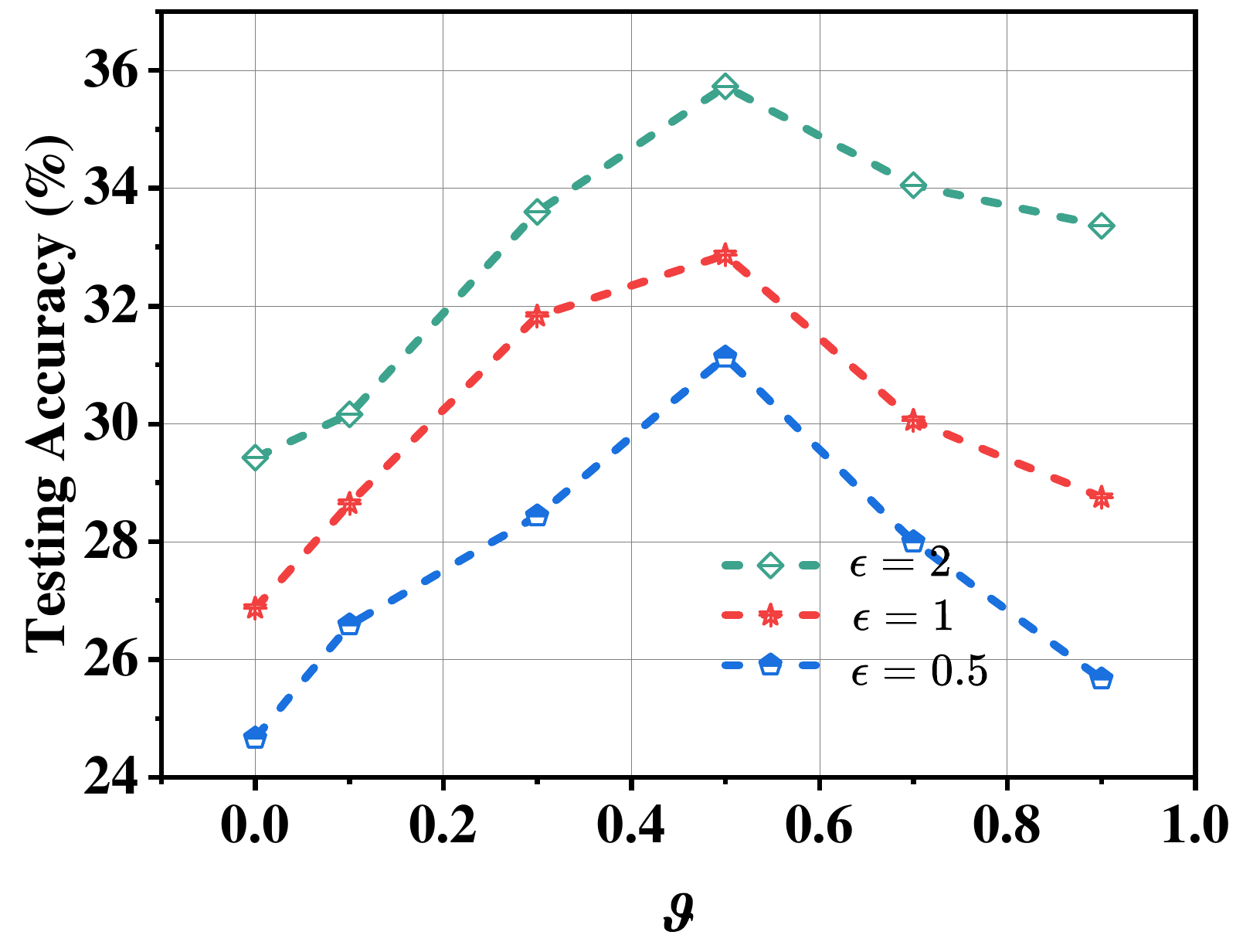}}\hfil
	\subfloat[MNIST]{\includegraphics[width=2in]{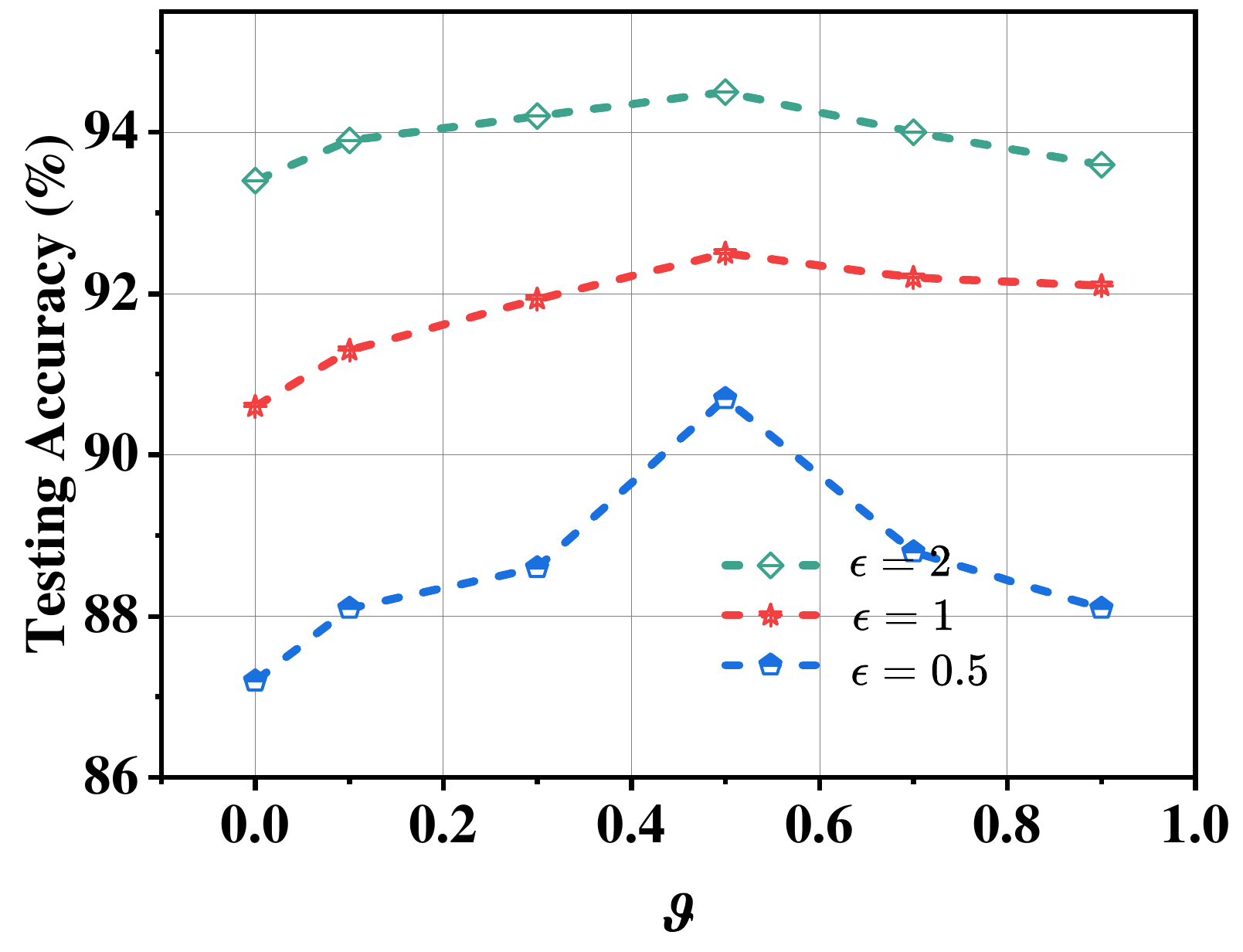}}\hfil
	\subfloat[Fashion-MNIST]{\includegraphics[width=2in]{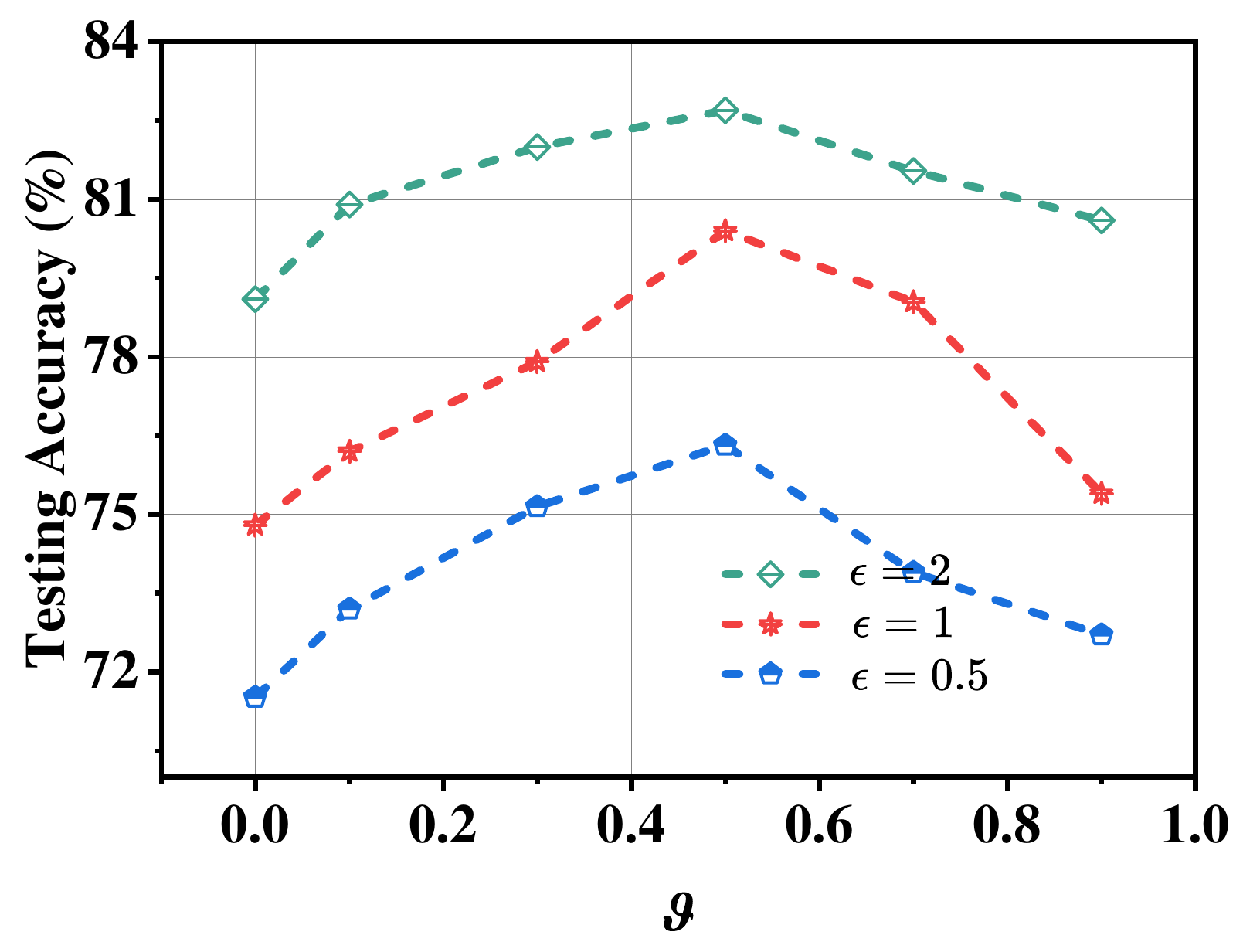}}\hfil
	\subfloat[CIFAR-10]{\includegraphics[width=2in]{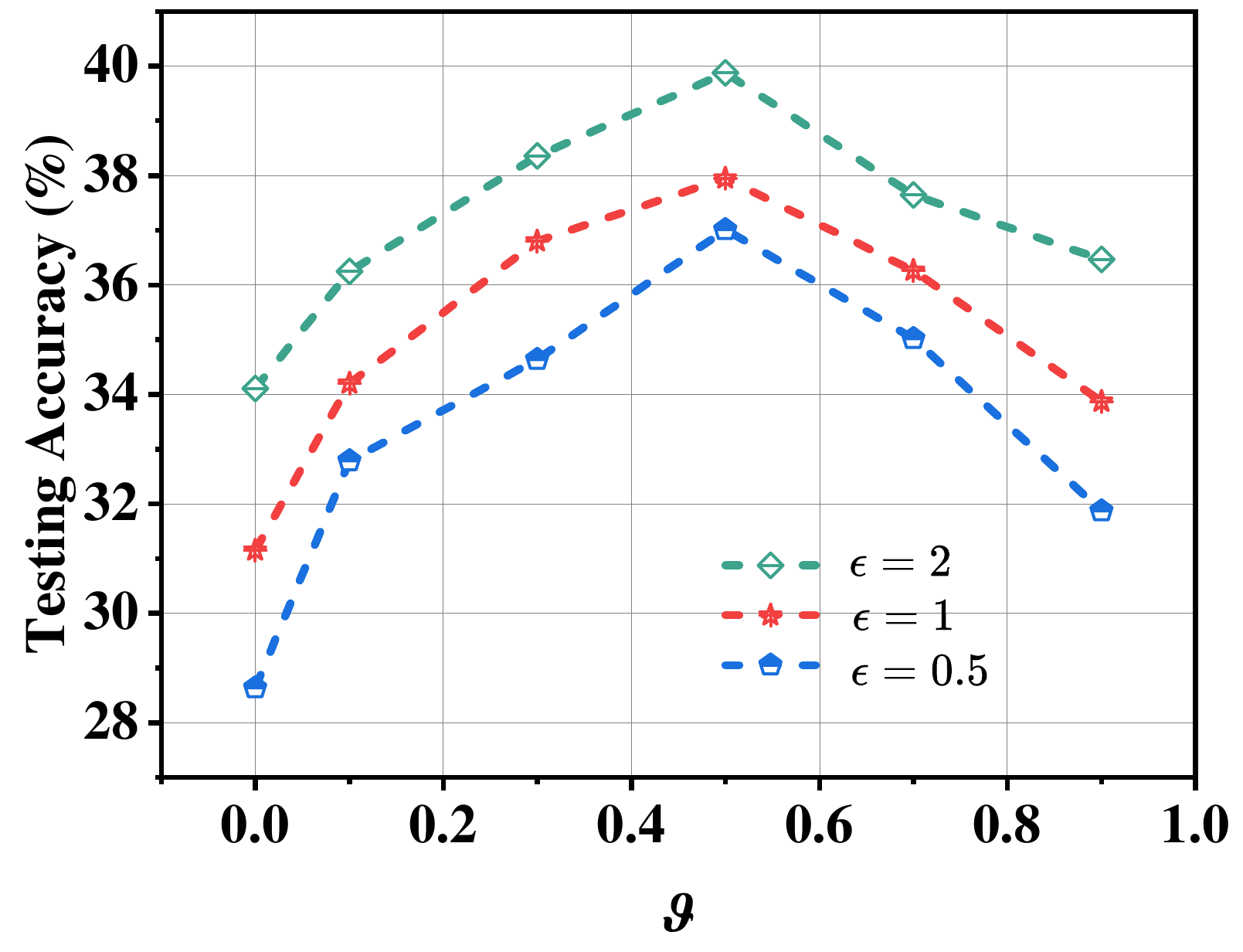}}
	\caption{Test accuracy (\%) versus hyperparameter \(\vartheta\) for different datasets. Upper panels: CNN results; lower panels: ResNet-18 results.}
	\label{fig:figure2}
\end{figure*}

\renewcommand{\arraystretch}{1.5}  
\setlength{\tabcolsep}{10pt}
\begin{table*}[htb]
	\caption{Test accuracy (\%) across different datasets under varying time interval values.}
	\centering
	\begin{tabular}{c c cccc cccc}  
		\hline
		Model & $\vartheta$ & Dataset & $\tau=1$ & $\tau=4$ & $\tau=6$ & $\tau=9$ & $\tau=12$ & $\tau=15$ & $\tau=30$ \\
		\hline
		\multirow{6}{*}{CNN} 
		& \multirow{2}{*}{$\vartheta=0.3$} 
		&MNIST& 79.21 & 85.07 & 86.24 & 88.34 & \textbf{89.38} & 86.71 & 85.75 \\
		& &Fashion-MNIST& 68.23 & 70.75 & 71.09 & 71.58 & \textbf{72.28} & 71.84 & 70.22 \\
		& \multirow{2}{*}{$\vartheta=0.5$} 
		&MNIST& 79.71 & 85.02 & \textbf{89.51} & 88.13 & 87.19 & 87.09 & 86.58 \\
		& &Fashion-MNIST& 68.58 & 71.29 & \textbf{72.92} & 72.01 & 71.55 & 70.97 & 70.05 \\
		& \multirow{2}{*}{$\vartheta=0.7$} 
		&MNIST& 80.28 & \textbf{87.61} & 87.41  & 87.41 & 87.29 & 86.85 & 86.76 \\
		& &Fashion-MNIST& 68.49 & \textbf{72.38} & 71.41  & 71.26 & 71.19 &71.15 & 70.83 \\
		\hline
		\multirow{6}{*}{ {ResNet-18}} 
		& \multirow{2}{*}{ {$\vartheta=0.3$}} 
		& {MNIST}&  {82.34} &  {85.75} &  {88.64} &  {89.27} &  {\textbf{90.59}} &  {89.67} &  {89.62} \\
		& & {Fashion-MNIST}&  {72.49} &  {75.83} &  {77.91} &  {78.39} &  {\textbf{78.61}} &  {77.39} &  {76.38} \\
		& \multirow{2}{*}{ {$\vartheta=0.5$}} 
		& {MNIST}&  {83.53} &  {89.29} &  {\textbf{90.73}} &  {89.72} &  {88.39} &  {87.06} &  {86.61} \\
		& & {Fashion-MNIST}&  {75.46} &  {79.58} &  {\textbf{80.49}} &  {79.52} &  {78.49} &  {78.03} &  {77.86} \\
		& \multirow{2}{*}{ {$\vartheta=0.7$}} 
		& {MNIST}&  {84.23} &  {\textbf{89.48}} &  {88.83}  &  {87.33} &  {87.14} & {86.42} &  {86.23} \\
		& & {Fashion-MNIST}&  {77.28} &  {\textbf{80.03}} &  {79.41}  &  {78.51} &  {77.65} & {77.36} &  {76.72} \\
		\hline
	\end{tabular}
	\label{tab:2}
\end{table*}

\subsection{Optimal Learning Rate under Varying Noise Decay Rates}

We examine the critical relationship between learning rate selection and noise decay rates \( s \) in our ADP-VRSGP algorithm. Through systematic experiments on CIFAR-10 with \( s \in\{0.2, 0.25, 0.3 \}\), we evaluate the algorithm's sensitivity to learning rate adjustments while maintaining node-specific privacy budgets (\( \epsilon = 6 \)) and noise intensity coefficient \( \alpha_i = ({\lfloor t / 5 \rfloor} + 10)^{s} \)). Our theoretical analysis in Theorem~\ref{theorem2} and Corollary~\ref{corollary} establishes the optimal learning rate configuration as: \( p = -\frac{1}{2} - 2s \), which yields distinct optimal values \( p^* = 0.1, 0, -0.1 \) for the three cases outlined in Corollary~\ref{corollary}. In addition to the optimal choices, we also examine other learning rates for comparative analysis: \( p^* - 0.1 \), \( p^* - 0.05 \), \( p^* + 0.05 \), and \( p^* + 0.1 \). 

Fig.~\ref{fig:figure1} shows our theoretically optimal learning rates \( p^* = 0.1, 0, -0.1 \) achieve superior convergence and the best test accuracy, while deviations consistently degrade performance: smaller rates (\( p^* +\{0.1, 0.05\}\)) slow convergence and hinder performance, whereas larger rates (\( p^* -\{0.1, 0.05\}\)) amplify noise and destabilize training. Notably, DP-VRSGP achieves better convergence at $p^*=0$ and $p^*=-0.1$ than at \( p^* = 0.1 \), which represents the current state-of-the-art noise bound. These results validate our theoretical analysis, showing that \( p^*\) optimally balances convergence speed with noise sensitivity under privacy constraints while improving noise bounds, achieving the best privacy-accuracy trade-off. Experiments show learning rate choice critically impacts model performance in differentially private decentralized learning.

\subsection{Effect of Hyperparamter \( \vartheta \) and Time Interval \(\tau\)}
To validate the robustness of our method across different environments, we evaluate the sensitivity of parameters $\vartheta$ and $\tau$ on both CNN and ResNet-10 models. For CNN, we adhere to the original settings, while for ResNet-10, we simulate a more heterogeneous environment. In this setting, nodes are indexed from $0$ to $n-1$. At iteration $t$, node $i$ selects its neighbors according to a periodic rule defined by the edge set: 
$$
\varepsilon^t = \{(i + j \cdot 2^{(t \bmod \log_2 n)}) \bmod n \mid j = 0, \ldots, \frac{n}{2}-1\}.
$$

\vspace{10pt}
\textbf{Effect of \( \vartheta \)}.
We systematically evaluate the impact of the hyperparameter \( \vartheta \in \{0, 0.1, 0.3, 0.5, 0.7, 0.9\}\) on convergence performance across multiple datasets with \( T = 300 \) iterations. Fig.~\ref{fig:figure2} reveals three key findings: (1) Non-zero \( \vartheta \) values consistently improve test accuracy compared to \( \vartheta = 0 \); (2) excessive \( \vartheta \) introduces significant stale gradient bias, while insufficient \( \vartheta \) amplifies noise error, both degrading convergence; and (3) the optimal balance occurs at intermediate values (e.g., \( \vartheta = 0.5 \)), which yields particularly strong gains under strict privacy constraints. For example, on MNIST, accuracy improves by 2.5\%--8.7\%, 1.9\%--6.3\%, and 1.1\% --1.9\% for \( \epsilon = 0.5, 1, 2 \), respectively. Comparable gains are observed on Fashion-MNIST (5.6\%--8.6\%, 3.6\%--2.8\%, and 2.4\%--2.8\%) and CIFAR-10 (6.5\%--8.5\%, 6.0\%--6.8\%, and 5.7\%--6.3\%). These results indicate an enhanced privacy-utility trade-off, with advantages becoming more pronounced under stronger privacy guarantees (i.e., smaller \( \epsilon\) values). Moreover, the convergence trends are highly consistent across both CNN and ResNet-18 models, validating the effectiveness of ADP-VRSGP under DP constraints and highlighting its robustness to hyperparameter variations in heterogeneous networks.

\textbf{Effect of \( \tau \)}.
We evaluate the impact of the time interval $\tau \in \{1, 4, 6, 9, 12, 15, 30\}$ on algorithm performance across different model architectures, using MNIST (\( \epsilon = 0.5 \)) and Fashion-MNIST (\( \epsilon = 1 \)) with three $\vartheta$ configurations (0.3, 0.5, and 0.7). As shown in Table~\ref{tab:2}, 
the theoretically optimal intervals from Remark~\ref{remark}—\( \tau = 12 \) for \( \vartheta = 0.3 \), \( \tau = 6 \) for \( \vartheta = 0.5 \), and \( \tau = 4 \) for \( \vartheta = 0.7 \)—consistently achieve peak test accuracy for both CNN and ResNet-18 models. These optimal values outperform all other \( \tau \) settings across experimental conditions, highlighting the algorithm's robustness in highly heterogeneous networks. This optimal balance stems from competing effects: larger \( \tau \) increases gradient staleness while smaller \( \tau \) amplifies noise error. These results demonstrate that the theoretically derived \( \tau \) optimally trades off these opposing factors, validating our convergence analysis in Theorem~\ref{theorem2}.

\begin{figure*}[htb]
	\centering
	\subfloat[Fashion-MNIST]{\includegraphics[width=2.7in]{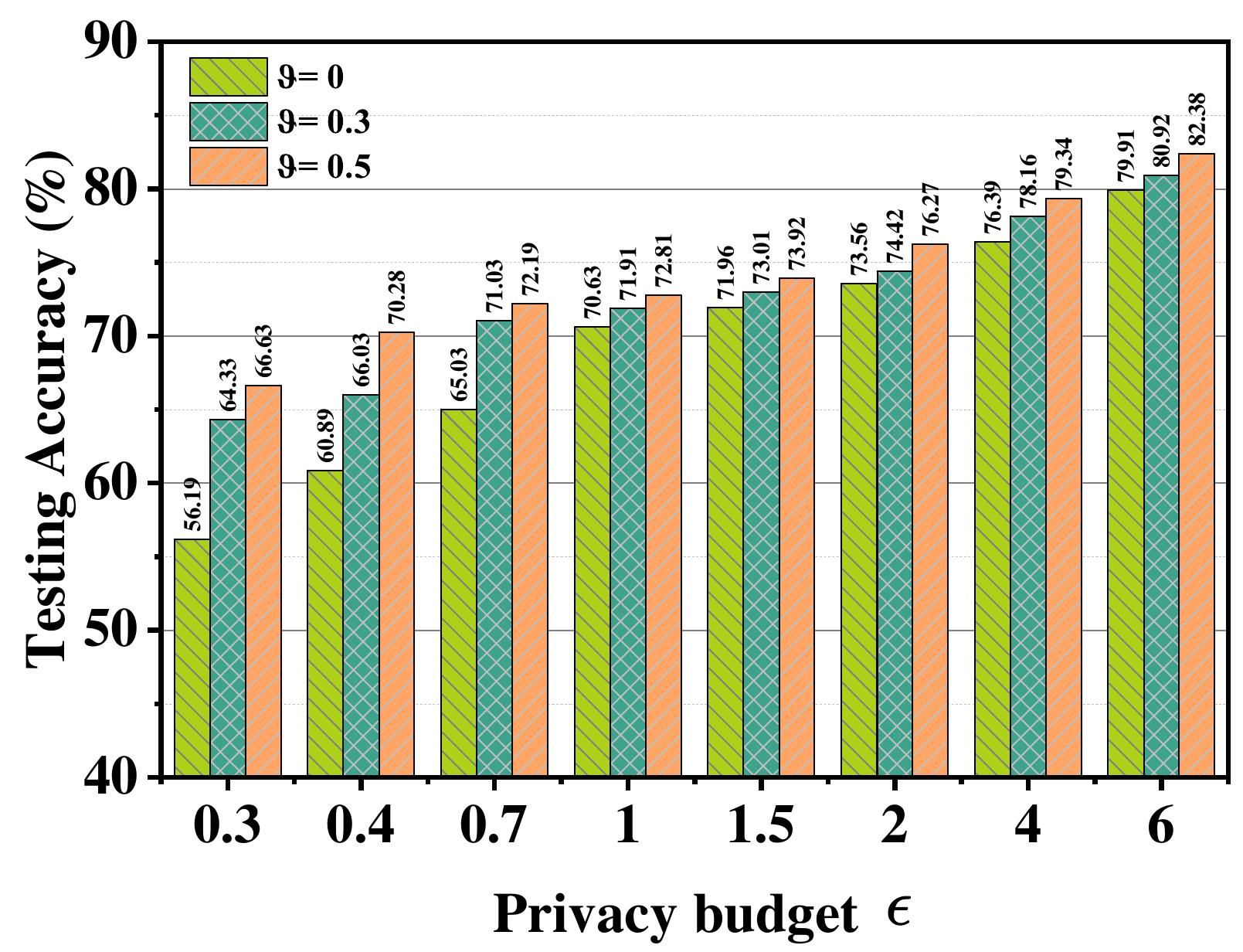}} \hfil
	\subfloat[CIFAR-10]{\includegraphics[width=2.7in]{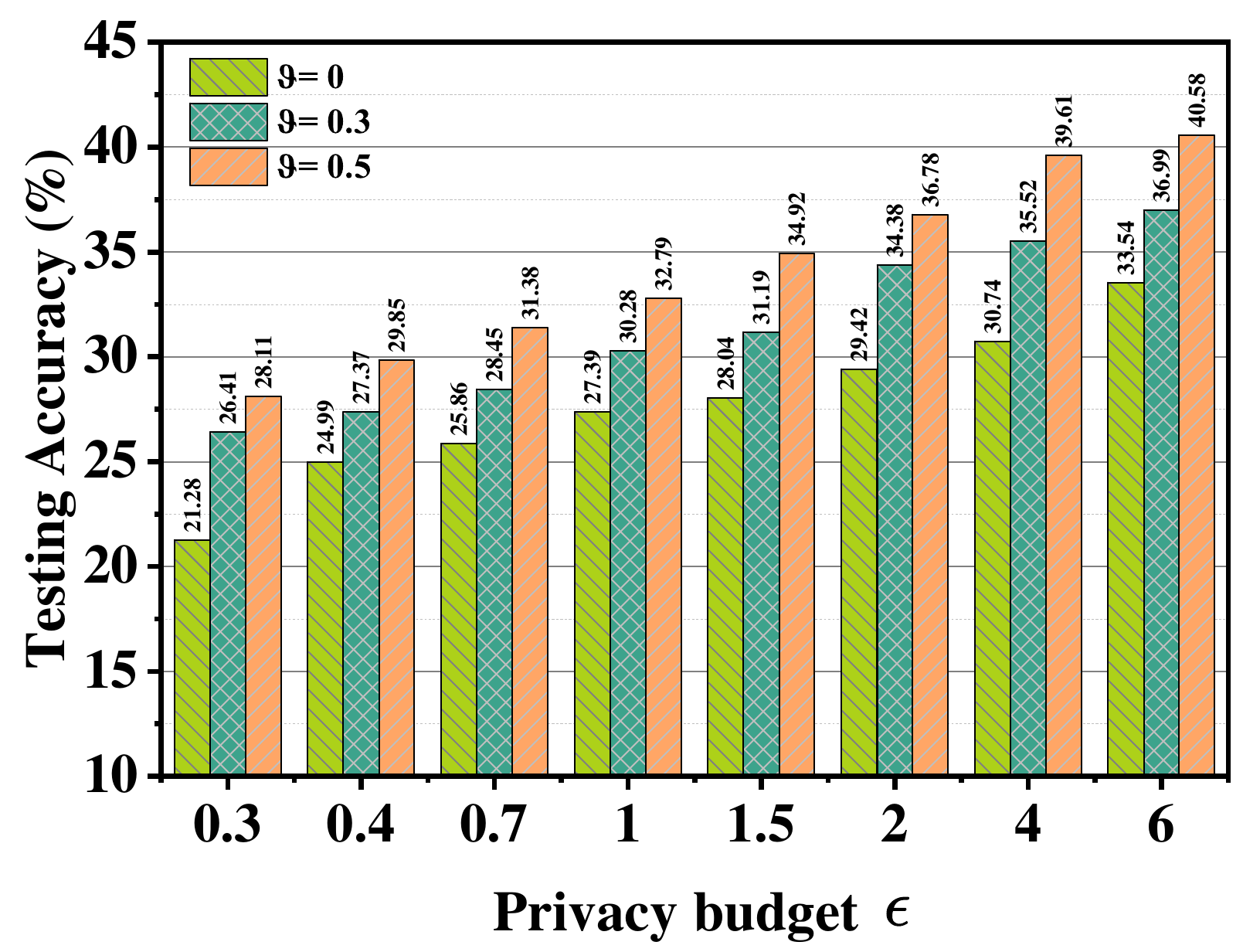}}
	\caption{Test accuracy (\%) on Fashion-MNIST and CIFAR-10 under varying privacy budgets.}
	
	\label{fig:figure3}
\end{figure*}

\renewcommand{\arraystretch}{1.5}  
\setlength{\tabcolsep}{6pt}
\begin{table*}[htb]
	\caption{Performance Comparison (\%) Across Different Numbers of Nodes.}
	\centering
	\begin{tabular}{c c c c c c c c c}
		\hline
		 {\textbf{Nodes}} &  {\textbf{A(DP)\(^2\)SGD}} & 
		 {\textbf{MAPA}} &  {\textbf{PPSGD}} & 
		 {\textbf{DFedSAM}} &  {\textbf{Fed-CDP}} & 
		 {\textbf{FedDPA}} &  {\textbf{FedAN}} & 
		 {\textbf{ADP-VRSGP}} \\
		\midrule
		 {$n=8$} &  {40.47} &  {50.67} & 
		 {51.82} &  {50.63} &  {52.08} &  {52.54} &  {51.83} &  {\textbf{53.56}} \\
		\midrule
		 {$n=16$} &  {39.08} &  {48.22} & 
		 {48.68} &  {46.75} &  {49.77} &  {48.25} &  {49.04} &  {\textbf{52.05}} \\
		\midrule
		 {$n=32$} &  {36.15} &  {46.16} & 
		 {47.76} &  {44.51} &  {48.73} &  {43.12} &  {47.52} &  {\textbf{50.36}} \\
		\midrule
		 {$n=64$} &  {32.71} &  {43.11} & 
		 {44.92} &  {42.45} &  {45.78} &  {40.87} &  {44.98}&  {\textbf{47.89}} \\
		\midrule
		 {$n=128$} &  {31.48} &  {41.86} & 
		 {40.74} &  {39.84} &  {43.32} &  {37.85} &  {42.30}&  {\textbf{45.71}} \\
		\hline
	\end{tabular}
	\normalsize
	\renewcommand{\arraystretch}{1}
	\label{tab:4}
\end{table*}

\renewcommand{\arraystretch}{1.5} 
\setlength{\tabcolsep}{10pt}
\begin{table*}[htb]
	\caption{Test accuracy (\%) on CIFAR-10 under different degrees of data heterogeneity.}
	\centering
	\begin{tabular}{l ccc ccc ccc}  
		\toprule
		Methods  & $\alpha=0.1$ & $\alpha=0.5$ & $\alpha=1$ & $\alpha=1.5$ & $\alpha=2$ \\
		\midrule
		A(DP)\(^2\)SGD & 28.70 & 35.48 & 38.44 & 41.91 & 42.15 \\
		
		MAPA & 35.14 & 43.61 & 47.27 & 48.85 & 49.02 \\
		
		PPSGD & 34.62 & 42.13 & 47.94 & 49.58 & 49.82 \\
		
		DFedSAM & 35.25 & 43.87 & 48.49 & 50.25 & 50.85 \\
		
		\hline 
		ADP-VRSGP & \textbf{37.48}($\uparrow$ 2.59) & \textbf{45.12}($\uparrow$ 1.25) & \textbf{50.15}($\uparrow$ 1.66) & \textbf{53.01}($\uparrow$ 2.76) & \textbf{53.42}($\uparrow$ 2.57) \\
		\hline
	\end{tabular}
	
	\label{tab:3}
\end{table*}

We also conduct an evaluation of PGF's performance across privacy budgets \( \epsilon \in \{0.3, 0.4, 0.7, 1, 1.5, 2, 4, 6\}\), comparing configurations with \( \vartheta = 0.3 \) and \( \vartheta = 0.5 \) (using their respective optimal \( \tau \) values from Remark~\ref{remark}) against the baseline \( \vartheta = 0 \). Fig.~\ref{fig:figure3} demonstrates that: 1) PGF consistently outperforms the \( \vartheta = 0 \) case across different privacy levels; and 2) the performance advantage becomes more pronounced with increasing privacy levels. These results validate PGF's enhanced capability to maintain model utility while providing rigorous privacy guarantees. 

\subsection{Effect of Node Scalability}
We evaluate the impact of network size on CIFAR-10 ($\epsilon = 6$), with results presented in Table~\ref{tab:4}. In small-scale settings (8 and 16 nodes), the model exhibits superior convergence and accuracy, benefiting from sufficient local data at each node for reliable gradient estimation. As the number of nodes increases, the local data volume per node decreases, reducing gradient reliability. This data sparsity, amplified by DP noise, leads to slower convergence and lower final accuracy for a fixed number of iterations. Consequently, large-scale environments require more training iterations to maintain model utility under the joint constraints of data sparsity and privacy.

\renewcommand{\arraystretch}{1.5} 
\setlength{\tabcolsep}{10pt}
\begin{table}[htb]
	\caption{Hyperparameter Selection ($\psi$ and $G$)}
	\centering	
	\begin{tabular}{c c c c c}
		\hline
		 {$\psi$} &  {0.9} & {0.95} & 
		 {0.99} &  {1} \\
		\hline
		 {Accuracy} &  {37.24} & 
		 {43.51} &  {47.27} & 
		 {46.83} \\
		\hline
	\end{tabular}

	\vspace{10pt}
	
	\begin{tabular}{c c c c c c}
		\hline
		 {$G$} &  {0.01} & 
		 {0.05} &  {0.1} & 
		 {0.5} &  {1} \\
		\hline
		 {Accuracy} &  {40.06} & 
		 {45.56} &  {47.27} & 
		 {44.65} &  {36.62} \\
		\hline
	\end{tabular}
	
	\renewcommand{\arraystretch}{1}
	\label{tab:11}
\end{table}

\subsection{Impact of Data Heterogeneity}
In this subsection, we evaluate our algorithm's robustness to varying data heterogeneity levels on CIFAR-10, controlling the degree of heterogeneous data partitioning through the Dirichlet concentration parameter \( \alpha \) (with fixed privacy budget \( \epsilon = 6 \)). Note that data heterogeneity can cause the local models of individual nodes to evolve in divergent directions during training, leading to significant discrepancies in feature representations, training dynamics, and gradient updates. Table~\ref{tab:3} presents two key findings that validate Theorem~\ref{theorem2}: 1) Performance degrades by 13-16\% as \( \alpha \) decreases from 2 to 0.1, reflecting increasing heterogeneity; 2) our algorithm maintains higher test accuracy than baselines across all \( \alpha \) values. The introduction of Gaussian noise further exacerbates these discrepancies between the local and global models, thereby increasing model bias. In scenarios with high data heterogeneity, the combination of Gaussian noise and inconsistencies in local data exacerbates the deviation in gradient directions during the model update process, amplifying the accumulation of model bias. Our variance reduction strategy mitigates this effect, effectively improving model accuracy. 

\subsection{Hyperparameter Sensitivity Analysis}
We evaluate the impact of key parameters on experimental results. The privacy parameter $\delta$  is fixed at $10^{-5}$ following mainstream literature. Sensitivity analyses for $\epsilon, \tau, s, \vartheta$ are presented in Tables~\ref{tab:1}--\ref{tab:2} and Figs.~\ref{fig:figure1}--\ref{fig:figure3}. For the gradient clipping threshold $G$ and the decay factor $\psi$, Table~\ref{tab:11} reports results for $(G \in \{0.01, 0.05, 0.1, 0.5, 1\}$ and $\psi \in \{0.9, 0.95, 0.99, 1\})$, identifying $G=0.1$ and $\psi=0.99$ as optimal. Excessively large $G$ introduces higher noise error, while overly small $G$ causes severe clipping bias, revealing a clear utility trade-off. All experiments adopt this optimal setting to ensure fairness.

\section{Conclusion}\label{Section:7}
This paper proposes ADP-VRSGP, an approach designed for privacy-preserving decentralized learning under time-varying directed communication topologies. By optimizing noise intensity and employing segmented learning rates, the proposed method effectively mitigates noise-induced errors while improving both model performance and convergence speed. Furthermore, ADP-VRSGP leverages historical gradients to address challenges such as slow convergence and potential divergence caused by excessive noise injection during early training stages. We provide a convergence analysis of ADP-VRSGP, deriving an improved noise bound under appropriate learning rate conditions. Experimental results demonstrate the approach's efficacy and robustness, showcasing superior performance across diverse test scenarios.  

Future work will focus on two key directions: 1) exploring alternative noise-decay strategies to optimize training through dynamic noise adjustment; and 2) investigating the integration of historical gradient aggregation with momentum methods or exponential moving averages to enable adaptive learning rate regulation, thereby further enhancing the efficiency and stability of model updates.

{\appendix[Proof of Theorem~\ref{theorem2}] 
         By Algorithm~2, we have
         \begin{equation}				
		\begin{aligned}
         x_{i}^{t+1/2} &= x_{i}^{t}-\eta_i^{t}\tilde{g}_i^t, \eta_i^{t}=\eta/\beta_i^t,\\
         x_{i}^{t+1}&=\sum_{i=1}^{n} P_{i,j}^{t}x_{j}^{t+1/2}.
        \end{aligned}
	\end{equation}
	
        Since the matrix \( P \) is column-stochastic, we further obtain
        \begin{equation}				
		\begin{aligned}
         \frac{1}{n}\sum_{i=1}^{n}x_i^{t+1}
         =\frac{1}{n}\sum_{i=1}^{n}x_i^{t}-\eta_i^{t}\left(\frac{1}{n}\sum_{i=1}^{n}\tilde{g}_i^{t}\right).
        \end{aligned}
	\end{equation}

	If Assumption~1 holds, we have 
	\begin{equation}				
		\begin{aligned}
			f\left(\frac{1}{n}\sum_{i=1}^{n}x_i^{t+1}\right)&\leq f\left(\frac{1}{n}\sum_{i=1}^{n}x_i^{t}\right)\\
			&-\frac{\eta}{\beta_i^{t}}\left\langle\nabla f\left(\frac{1}{n}\sum_{i=1}^{n}x_i^{t+1}\right),\frac{1}{n}\sum_{i=1}^{n}\tilde{g}_i^{t}\right\rangle\\
			&+\frac{L\eta^{2}}{2\left(\beta_i^{t}\right)^{2}}\left\|\frac{1}{n}\sum_{i=1}^{n}\tilde{g}_i^{t}\right\|^{2}.
		\end{aligned}
		\label{7}
	\end{equation}
	
	Taking the expectation with respect to the randomness of $b$, we derive
	\begin{equation}
		\begin{aligned}
			{E}_{b}  &\left \{ f\left(\frac{1 }{n}\textstyle\sum_{i=1}^nx_i^{t+1}\right) \right \}  \le  f\left(\frac{1}{n}\textstyle\sum_{i=1}^nx_i^{t}\right)\\
			&+\frac{L \eta^2}{2\left(\beta_i^t\right)^2}{E}_{b}\left \{ \left\|\frac{1}{n}\textstyle\sum_{i=1}^n\tilde g_i^t \right\|^2 \right \}  \\
			&-\frac{\eta}{\beta_i^t}{E}_{b}\left\{ \left\langle \nabla f\left(\frac{1 }{n}\textstyle\sum_{i=1}^nx_i^{t+1}\right), \frac{1 }{n}\textstyle\sum_{i=1}^n\tilde g_i^t\right\rangle \right\}.
		\end{aligned}
		\label{8}
	\end{equation}
	
	For the error caused by gradient clipping, we define
	\begin{equation*}
		E_{b}\{\Delta_i^{t}\} = {E}\left\| \left( 1 - \min\left\{ 1, \frac{G}{\|g_i^t\|} \right\} \right) g_i^t \right\|^{2}=
		\upsilon_i^t.
	\end{equation*}
	
	The term $\tilde g_i^t$ is further decomposed into
	\begin{equation}
		\begin{aligned}
			&E_{b} \left\{\tilde g_i^t\right\}  =E_b \left\{(g_{i}^{t}+d_\tau^{t}+\upsilon_i^t)+\alpha_i^{T-t} u_i^t\right\}.
		\end{aligned}
		\label{9}
	\end{equation}
	
	Denote the expectation with respect to the randomness of $N(0,\sigma_i^2I)$ as ${E}_{u}$. Given the conditions of $b$ and $N(0,\sigma_i^2I)$. we have 
	\begin{equation}
		\begin{aligned}
			E_{J}& \left\{\left\|\frac{1}{n}\textstyle\sum_{i=1}^n \tilde{g}_i^t\right\|^2 \right\}
			\\
			&=E_{J}\left\{\left\|\frac{1}{n}\textstyle\sum_{i=1}^n
			(g_{i}^{t}+d_\tau^{t}+\upsilon_i^t)+\frac{1}{n}\textstyle\sum_{i=1}^n \alpha_i^{T-t} u_i^t\right\|^2\right\} \\
			&=E_{J}\left\{\left\|\frac{1}{n}\textstyle\sum_{i=1}^n (g_{i}^{t}+d_\tau^{t}+\upsilon_i^t)\right\|^2\right\} \\
			& \quad \quad+E_{J}\left\{\left\|\frac{1}{n}\textstyle\sum_{i=1}^n \alpha_i^{T-t}u_i^t\right\|^2\right\},	
		\end{aligned}
		\label{10}
	\end{equation}
	where ${E}_{J}={E}_{b}{E}_{u}$.

	Next, we obtain 
	\begin{equation}
		\begin{aligned}		
			E_{b}\left\langle\frac{1}{n}\textstyle\sum_{i=1}^n \alpha_i^{T-t} u_i^t,  \nabla f\left(\frac{1 }{n}\textstyle\sum_{i=1}^nx_i^{t+1}\right)\right\rangle=0
		\end{aligned}.
		\label{11}
	\end{equation}
	
	By defining $\mathrm {O} ^{t}= \frac{1}{n} {\textstyle \sum_{i=1}^{n}{E}    \left \{  \left    \| z_{i}^{t}-\bar{x} ^{t}    \right \|^{2}  \right \}}$,  where $\bar{x} ^{t}=\frac{1 }{n}\textstyle\sum_{i=1}^nx_i^{t}$. Under the condition $\frac{\eta}{\beta^t} \leq \min \left\{1, \frac{1}{L} \right\}$, we simplify (\ref{8}) by substituting (\ref{9}), (\ref{10}), and (\ref{11}) into (\ref{8}), yielding
	\begin{equation}
		\begin{aligned}
			\begin{aligned}&E_{J}\left\{f\left(\frac{1}{n}\sum_{i=1}^nx_i^{t+1}\right)\right\}\leq f\left(\frac{1}{n}\sum_{i=1}^nx_i^t\right)\\&\quad-\frac{\eta}{2\beta_i^{t}}E_{J}\left\{\left\|\nabla f\left(\frac{1}{n}\sum_{i=1}^{n}x_i^{t}\right)\right\|^{2}\right\}\\&\quad-\frac{\eta}{2\beta_i^{t}}E_{J}\left\{\left\|\frac{1}{n}\sum_{i=1}^{n}(g_{i}^{t}+d_{\tau}^{t}+v_{i}^{t})\right\|^{2}\right\}\\&\quad+\frac{\eta}{2\beta_i^{t}}E_{J}\left\{\left\|\nabla f\left(\frac{1}{n}\sum_{i=1}^{n}x_i^{t}\right)-\frac{1}{n}\sum_{i=1}^{n}(g_{i}^{t}+d_{\tau}^{t}+v_{i}^{t})\right\|^{2}\right\}\\&\quad+\frac{L\eta^{2}}{2(\beta_i^{t})^{2}}E_{J}\left\{\left\|\frac{1}{n}\sum_{i=1}^{n}(g_{i}^{t}+d_{\tau}^{t}+v_{i}^{t})\right\|^{2}\right\}\\&\quad+\frac{L\eta^{2}}{2(\beta_i^{t})^{2}}E_{J}\left\{\left\|\frac{1}{n}\sum_{i=1}^{n}\alpha_i^{T-t}u_{i}^{t}\right\|^{2}\right\}\\&\leq f\left(\frac{1}{n}\sum_{i=1}^{n}x_i^{t}\right)-\frac{\eta}{2\beta_i^{t}}E_{J}\left\{\left\|\nabla f\left(\frac{1}{n}\sum_{i=1}^{n}x_i^{t}\right)\right\|^{2}\right\}\\&\quad+\left(LO^{t}+E_{J}\left\{\frac{1}{n}\sum_{i=1}^{n}\|d_{\tau}^{t}+v_{i}^{t}\|^{2}\right\}\right)\\&\quad+\frac{L\eta^2}{2n}\frac{\left(\alpha_i^{T-t}\right)^2}{\left(\beta_i^t\right)^2}\frac{1}{n}\sum_{i=1}^nE_{J}\left\{\left\|u_{i}^{t}\right\|^{2}\right\}.\end{aligned}
		\end{aligned}
		\label{12}
	\end{equation}
	
	Taking expectation of both sides of (\ref{12}), rearranging, and summing from \(t = 0\) to \(t = T-1\), we have	
	\begin{equation}
		\begin{aligned}
			\frac{1}{T}&{\textstyle \sum_{t=0}^{T-1}}\frac{\eta}{2 \beta_i^t} E_{J}\left\{\left\|\nabla f\left(\frac{1}{n}\textstyle\sum_{i=1}^n x_i^t \right)\right\|\right\} \\
			&\le \frac{F_0}{T}+\frac{L\eta^2}{2n}\frac{\left(\alpha_i^{T-t}\right)^2}{\left(\beta_i^t\right)^2}\frac{1}{nT}{\textstyle \sum_{t=0}^{T-1}}{\textstyle \sum_{i=1}^{n}}E_{J}\left\{\left\|u_{i}^{t}\right\|^{2}\right\} \\
			&\quad +  \frac{L}{T} {\textstyle \sum_{t=0}^{T-1}}O^t + \frac{1}{T}{\textstyle \sum_{t=0}^{T-1}}E \left\{ \frac{1}{n}\sum_{i=1}^{n}\| d_\tau^{t}+  \upsilon_i^t \|^2 \right\}.
		\end{aligned}
		\label{13}
	\end{equation}
where $F_0 = f(x_0) - f^*$. 

	Let \( \bar{x}^{t} = \frac{1}{n}\textstyle\sum_{i=1}^n x_i^{t} \). Given \(\alpha_i^{t} = \sqrt{K} t^{\frac{1}{4} - \frac{p}{2}}\) with a positive constant \(K\), we deduce that \(\alpha_i^{t}\) is non-decreasing, implying \(\beta_i^{t} \leq K T^{\frac{1}{2}-p}\). Consequently, we derive the gradient-norm bound: ${\textstyle \sum_{t=0}^{T-1}\frac{\eta}{2 KT^{\frac{1}{2}-p}}{E}  \left \| \nabla{f\left ( \bar{x}^{t}  \right ) }  \right \|^{2} }\le{\textstyle \sum_{t=0}^{T-1}\frac{\eta}{2 \beta_i^t}{E}  \left \| \nabla{f\left ( \bar{x}^{t}  \right ) }  \right \|^{2} }$. It follows that 
	\begin{equation}
		\begin{aligned}
			\frac{1}{T} {\textstyle \sum_{t=0}^{T-1}}&  E \left\| \nabla f \left( \bar{x}^{t} \right) \right\|^2 \le \frac{K}{\eta} \left( \frac{2F_0}{T^{\frac{1}{2}+p}} + \frac{2L}{T^{\frac{1}{2}+p}} {\textstyle \sum_{t=0}^{T-1}} O^t \right. \\
			& \quad \left. + \frac{2}{T^{\frac{1}{2}+p}} {\textstyle \sum_{t=0}^{T-1}} \frac{1}{n} \sum_{i=1}^{n} E \left\{ \left\| d_\tau^t + \upsilon_i^t \right\|^2 \right\} \right. \\
			& \quad \left. + \frac{L \eta^2}{n T^{\frac{1}{2}+p}} {\textstyle \sum_{t=0}^{T-1}} \sum_{i=1}^{n} \frac{\left( \alpha_i^{T-t} \right)^2}{\left( \beta_i^t \right)^2} E \left\{ \left\| u_i^t \right\|^2 \right\} \right).
		\end{aligned}
		\label{14}
	\end{equation}

	Under Assumption~\ref{assumption:3}, let $\lambda = 1 - nU ^{-\kappa B}$ and $q =\lambda^{\frac{1}{{\kappa B + 1}}}$, where $U$ denotes the maximum number of out-neighbors for any node in any iteration. Then, there exists a constant $C < \frac{2\sqrt{d}U^{\kappa B}}{\lambda^{\frac{\kappa B+2}{\kappa B+1} }}$ such that for all nodes $i \in \{1, 2, \ldots, n\}$ and iterations $t\geq 0$, $\left \| z_{i}^t - \bar{x}^t \right \| $ is bounded by 
	\begin{equation} 
		\begin{aligned}
			\left \| z_{i}^t - \bar{x}^t \right \|  \leq Cq^t \left \| x_{i}^0 \right \|  + \eta_i ^{t} C {\textstyle\sum_{s=0}^{t} q^{t-s} \left \| \tilde g_{i}^{s} \right \| }. 
		\end{aligned}
		\label{15}
	\end{equation}
	
	It is derived from a minor modification of Theorem~1 in \cite{assran2020asynchronous}. 
	For (\ref{9}), we take the expectation as
	\begin{equation}
		\begin{aligned}
			{E}\{\tilde{g}_i^t\} = {E}\{g_i^t\} + {E}\{d_\tau^{t }\} + {E}\{\upsilon_i^t\} + {E}\{\alpha_i ^{T-t}u_i^{t }\}.
		\end{aligned}
		\label{l2}
	\end{equation}
	
	Thus, by applying (\ref{l2}), (\ref{15}) can be rewritten as
	\begin{equation}
		\begin{aligned}
			\left \| z_{i}^{t}-\bar{x} ^{t}    \right \| \le& Cq^{t} \left \| x_{i}^{0}  \right \| + C {\textstyle \sum_{s=0}^{t}}\eta_i ^{s}q^{t-s}\left \| g_{i}^{s}+d_\tau^{s}+\upsilon_i^s  \right \| \\
			&+ C {\textstyle \sum_{s=0}^{t}}\eta_i ^{s}q^{t-s}\left \| \alpha_i ^{T-s}u_{i}^{s}    \right \|.
		\end{aligned}
		\label{16}
	\end{equation}
	
	Taking the square of both sides, we obtain
	\begin{equation}
		\begin{aligned}
			\left \| z_{i}^{t}-\bar{x} ^{t}    \right \|^{2} &\leq 3C^{2}q^{2t} \left \| x_{i}^{0}  \right \|^{2} \\
			&+\frac{3C^{2} }{1-q}  {\textstyle \sum_{s=0}^{t}}\eta_i ^{2s}q^{t-s}\left \| g_{i}^{s}+d_\tau^{s}+\upsilon_i^s  \right \|^{2}\\
			&+\frac{3C^{2} }{1-q}  {\textstyle \sum_{s=0}^{t}}\eta_i ^{2s}q^{t-s}\left \| \alpha_i ^{T-s}u_{i}^{s}    \right \|^{2},
		\end{aligned} 
		\label{17}
	\end{equation}
	where we have used the Cauchy-Schwarz inequality. It follows 
	\begin{equation*}
		\begin{aligned}
			\left ( \textstyle\sum_{s=0}^{t} q^{t-s}  g_{i}^s \right )^2  &= \left ( \textstyle\sum_{s=0}^{t} q^\frac{t-s}{2} \left (  q^\frac{t-s}{2}  g_{i}^s \right )  \right ) ^2 \\
			&\leq \frac{1}{1-q} \textstyle\sum_{s=0}^{t} q^{t-s} \left ( g_{i}^s \right )^2.
		\end{aligned} 
	\end{equation*}
	
	Taking expectations of both sides of (\ref{17}), we have
	\begin{equation}
		\begin{aligned}
			{E}    \left \{  \left    \| z_{i}^{t}-\bar{x} ^{t}    \right \|^{2}  \right \}&\le 3C^{2}q^{2t} \left \| x_{i}^{0}  \right \|^{2}  \\
			&+\frac{3C^{2}\eta ^{2} }{1-q}  {\textstyle \sum_{s=0}^{t}}\frac{q^{t-s}}{\beta_i ^{2s} } {E}   \left \{ \left \| g_{i}^{s}+d_\tau^{s}+\upsilon_i^s  \right \|^{2} \right \}\\
			&+\frac{3C^{2}\eta ^{2} }{1-q}  {\textstyle \sum_{s=0}^{t}}\frac{q^{t-s}}{(\beta_i ^{s})^2 }{E}  \left \{  \left \| \alpha_i ^{T-s}u_{i}^{s}    \right \|^{2}  \right \}.
		\end{aligned}
		\label{18}
	\end{equation}
	For the second term in (\ref{18}), we have
	\begin{equation}
		\begin{aligned}
			{E}   &\left \{ \left \| g_{i}^{s}+d_\tau^{s}+\upsilon_i^s  \right \|^{2} \right \}  \le 3{E}   \left \{ \left \| g_{i}^{s}-\nabla f_{i}(z_{i}^{s})   \right \|^{2} \right \}  \\
			&+ 3{E}   \left \{ \left \| \nabla f_{i}(z_{i}^{s})   \right \|^{2} \right \} +3{E}   \left \{ \left \| d_\tau^{s}+\upsilon_i^s  \right \|^{2} \right \}.
		\end{aligned}
		\label{19}
	\end{equation}
    
	\vspace{20pt}

	For the first term in (\ref{19}), we have
	\begin{equation}
		\begin{aligned}
			{E}  &\left \{ \left \| g_{i}^{s}- \nabla f_{i}(z_{i}^{s})   \right \|^{2} \right \}\\
			&\le \frac{1}{\|B_i\|} {\textstyle \sum_{b \in B_i}} {E} \left \| \nabla f(z_{i}^{s},b )-\nabla f_{i}(z_{i}^{s})    \right \|^{2} \\
			&\le m^{2}.
		\end{aligned}
		\label{20}
	\end{equation}

	Next, for the second term in (\ref{19}), using Assumption~\ref{assumption:1} and Assumption~\ref{assumption:2}, we obtain
	\begin{equation}
		\begin{aligned}
			{E}&   \left \{ \left \| \nabla f_{i}(z_{i}^{s})   \right \|^{2} \right \}\le\left\|\nabla f_{i}\left(z_{i}^{s}\right)-\nabla f_{i}\left(\bar{x}^{s}\right)\right.\\ 
			&\left.\quad+\nabla f_{i}\left(\bar{x}^{s}\right)-\nabla f\left(\bar{x}^{s}\right)+\nabla f\left(\bar{x}^{s}\right)\right\|^{2}\\
			& \le 3\left\|\nabla f_{i}\left(z_{i}^{s}\right)-\nabla f_{i}\left(\bar{x}^{s}\right)\right\|^{2}+3\left\|\nabla f_{i}\left(\bar{x}^{s}\right)-\nabla f\left(\bar{x}^{s}\right)\right\|^{2}\\
			&\quad+3\left\|\nabla f\left(\bar{x}^{k}\right)\right\|^{2}\\
			&\le 3L^{2}{E}  \left \| z_{i}^{s}-\bar{x} ^{s}    \right \|^{2}+\!3a^{2}+3{E}\left \| \nabla f(\bar{x} ^{s}) \right \| ^{2}.
		\end{aligned}
		\label{21}
	\end{equation}
}
	
	Substituting (\ref{19}), (\ref{20}), and (\ref{21}) into (\ref{18}), we derive
	\begin{equation}
		\begin{aligned}
			{E}   & \left \{  \left    \| z_{i}^{t}-\bar{x} ^{t}    \right \|^{2}  \right \}\le 3C^{2}q^{2t} \left \| x_{i}^{0}  \right \|^{2} \\
			&\quad+\frac{9C^{2}\eta ^{2}\left ( m^{2}+3a^{2} \right )  }{1-q}  {\textstyle \sum_{s=0}^{t}}\frac{q^{t-s}}{(\beta_i ^{s})^2 }\\
			&\quad+  \frac{27C^{2}L^{2}\eta ^{2} }{1-q}  {\textstyle \sum_{s=0}^{t}}\frac{q^{t-s}}{(\beta_i ^{s})^2 } {E}    \left \{  \left    \| z_{i}^{s}-\bar{x} ^{s}    \right \|^{2}  \right \}\\
			&\quad+\frac{9C^{2}\eta ^{2} }{1-q}  {\textstyle \sum_{s=0}^{t}}\frac{q^{t-s}}{(\beta_i ^{s})^2 } {E}   \left \{ \left \| d_\tau^{s}+\upsilon_i^s   \right \|^{2} \right \} \\
			&\quad+\frac{27C^{2}\eta ^{2} }{1-q}  {\textstyle \sum_{s=0}^{t}}\frac{q^{t-s}}{(\beta_i ^{s})^2 } {E}\left \| \nabla f(\bar{x}^{s}) \right \| ^{2} \\
			&\quad   +\frac{3hC^{2}\eta ^{2}d\sigma _i ^{2} }{1-q}  {\textstyle \sum_{s=0}^{t}}\frac{q^{t-s}(\alpha_i^{(T-s)})^2}{(\beta_i ^{s})^2 },
		\end{aligned}
		\label{23}
	\end{equation}
	where $h=\frac{(1-\vartheta)}{(1+\vartheta)} + \frac{2}{(1+\vartheta)}\vartheta^{2\tau-1}$. 
	
	For simplicity, we assume that the parameters \( \alpha_i \) and \( \eta_i \) are identical for all nodes. Perform summation from $i=1$ to $i=n$, we have
	\begin{equation}
		\begin{aligned}
			\mathrm {O} ^{t}&= \frac{1}{n} {\textstyle \sum_{i=1}^{n}{E}    \left \{  \left    \| z_{i}^{t}-\bar{x} ^{t}    \right \|^{2}  \right \}} 	\\
			&\le 3C^{2}q^{2t} \left \| x_{i}^{0}  \right \|^{2} +\frac{9C^{2}\eta ^{2}\left ( m^{2}+3a^{2} \right )  }{1-q}  {\textstyle \! \sum_{s=0}^{t}}\frac{q^{t-s}}{(\beta_i ^{s})^2 }\\
			&\quad +  \frac{27C^{2}L^{2}\eta ^{2} }{1-q}  {\textstyle \sum_{s=0}^{t}}\frac{q^{t-s}}{(\beta_i ^{s})^2 } \mathrm {O} ^{s}\\
			&\quad+\frac{9C^{2}\eta ^{2} }{1-q}\frac{1}{n}  \sum_{i=1}^{n}  {\textstyle \sum_{s=0}^{t}}\frac{q^{t-s}}{(\beta_i ^{s})^2 } E \left \{ \left \| d_\tau^{s}+\upsilon_i^s   \right \|^{2} \right \} \\
			&\quad+\frac{27C^{2}\eta ^{2} }{1-q}  {\textstyle \sum_{s=0}^{t}}\frac{q^{t-s}}{(\beta_i ^{s})^2 } {E}\left \{ \left \| \nabla f(\bar{x} ^{s}) \right \| ^{2} \right \}  \\
			&\quad+\frac{3hC^{2}\eta ^{2}\frac{d}{n} {\textstyle \sum_{i=1}^{n}\sigma _i ^{2}}   }{1-q}  {\textstyle \sum_{s=0}^{t}}\frac{q^{t-s}(\alpha_i^{(T-s)})^2}{(\beta_i ^{s})^2 }.
		\end{aligned}
		\label{24}
	\end{equation}
	
	Summing from $t = 0$ to $t = T-1$ and reorganizing (\ref{24}), we further obtain
	\begin{equation}
		\begin{aligned}
			{\textstyle \sum_{t=0}^{T-1}}&\mathrm {O} ^{t}  \le   \frac{3C^{2}}{1-q^{2} }  \left \| x_{i}^{0}  \right \|^{2} \\
			&+\frac{9C^{2}\eta ^{2}\left ( m^{2}+3a^{2} \right )  }{1-q} {\textstyle \sum_{t=0}^{T-1}}{\textstyle \sum_{s=0}^{t}}\frac{q^{t-s}}{(\beta_i ^{s})^2 }   \\
			&+  \frac{27C^{2}L^{2}\eta ^{2} }{1-q}  {\textstyle \sum_{t=0}^{T-1}{\textstyle \sum_{s = 0}^{t}}\frac{q^{t-s}}{(\beta_i ^{s})^2 }}   \mathrm {O} ^{s}\\
			&+\frac{9C^{2}\eta ^{2} }{1-q}\frac{1}{n}  \sum_{i=1}^{n} {\textstyle \sum_{t=0}^{T-1}} {\textstyle \sum_{s=0}^{t}}\frac{q^{t-s}}{(\beta_i ^{s})^2 } {E}   \left \{ \left \| d_\tau^{s}+\upsilon_i^s   \right \|^{2} \right \} \\
			&+\frac{27C^{2}\eta ^{2} }{1-q}  {\textstyle \sum_{t=0}^{T-1}{\textstyle \sum_{s = 0}^{t}}\frac{q^{t-s}}{(\beta_i ^{s})^2 }} {E}\left \| \nabla f(\bar{x} ^{s}) \right \| ^{2}  \\
			&+\frac{3hC^{2}\eta ^{2}\frac{d}{n} {\textstyle \sum_{i = 1}^{n}\sigma _i ^{2}}   }{1-q}  {\textstyle \sum_{t=0}^{T-1}{\textstyle \sum_{s = 0}^{t}}\frac{q^{t-s}(\alpha_i^{(T-s)})^2}{(\beta_i ^{s})^2}}.
		\end{aligned}
		\label{25}
	\end{equation}
	
	Given \( \eta = \frac{K\sqrt{n}}{T^{p}} \) under \( \left(\frac{9}{\sqrt{n} }\right)^{\frac{4}{3-2p}} < T \), we have
	\begin{equation}
		\begin{aligned}
			{\textstyle \sum_{t=0}^{T-1}\mathrm {O} ^{t}}   \le&  \frac{3C^{2}}{1-q^{2} }  \left \| x_{i}^{0}  \right \|^{2} +\frac{C^{2}\left ( m^{2}+3a^{2} \right )  }{(1-q)^{2}}  \\
			&+  \frac{27C^{2}L^{2}\eta ^{2} }{(1-q)^{2}}  {\textstyle \sum_{t=0}^{T-1}{\frac{1}{(\beta_i ^{t})^2 }  }}   \mathrm {O} ^{t}\\
			&+\frac{C^{2}}{(1-q)^{2}} \frac{1}{n}  \sum_{i=1}^{n}{E}   \left \{ \left \| d_\tau^{s}+\upsilon_i^s   \right \|^{2} \right \} \\
			&+\frac{27C^{2}\eta ^{2} }{1-q}  {\textstyle \sum_{t=0}^{T-1}{\textstyle \sum_{s = 0}^{t}}\frac{q^{t-s}}{(\beta_i ^{s})^2 }} {E}\left \| \nabla f(\bar{x} ^{s}) \right \| ^{2}  \\
			&+\frac{3hC^{2}\eta ^{2}\frac{d}{n} {\textstyle \sum_{i = 1}^{n}\sigma _i ^{2}}   }{(1-q)^{2}}  {\textstyle \sum_{t=0}^{T-1}\frac{(\alpha_i^{(T-t)})^2}{(\beta_i ^{t})^2}}.
		\end{aligned}
		\label{26}
	\end{equation}
	
	Let \( D=1-\frac{81C^{2}L^{2} }{(1-q)^{2}} \frac{\eta^{2} }{(\beta_i ^{0})^{2}}\ge 0 \). Under \( T\geq \left(\frac{2K}{nL}\right)^{\frac{1}{1/2+p}} \), substituting (\ref{13}) into (\ref{26}), we obtain
	\begin{equation}
		\begin{aligned}
			&\frac{D}{T} \sum_{t=0}^{T-1} \mathrm{O}^{t}  \le \frac{1}{T} \left( \frac{3C^{2}}{1-q^{2}} \left\| x_{i}^{0} \right\|^{2} + \frac{54C^{2}}{(1-q)^{2}} \mathrm{F^{0}} \right. \\
			&\left. + \frac{9C^{2} \left( m^{2} + 3a^{2} \right)}{(1-q)^2} \right) + \frac{3hC^{2} \eta^{2} \frac{d}{n} \sum_{i=1}^{n} \sigma_i^{2}}{(1-q)^{2}DT} \sum_{t=0}^{T-1} \frac{(\alpha_i^{(T-t)})^2}{(\beta_i^{t})^2} \\
			& + \frac{27hLC^{2} \eta^{2} \frac{d}{n} \sum_{i=1}^{n} \sigma_i^{2}}{(1-q)^{2}nDT} \sum_{t=0}^{T-1} \frac{(\alpha_i^{(T-t)})^2}{(\beta_i^{t})^2} \\
			& + \frac{55C^{2}}{(1-q)^{2}DT} \frac{1}{n} \sum_{i=1}^{n} \sum_{t=0}^{T-1} \mathbb{E} \left\{ \left\| d_\tau^{t} + \upsilon_i^t \right\|^{2} \right\}.
		\end{aligned}
		\label{27}
	\end{equation}
	
	Given that \( \left(\frac{162nC^{2}L^{2}}{1-q^{2} }\right)^{\frac{2}{1+2p}} < T \), we derive the lower bound for $D$ as
	\begin{equation}
		\begin{aligned} 
			D=1-\frac{27c^{2}L^{2} }{(1-q)^{2}} (\frac{\eta^{2} }{(\beta_i ^{0})^{2}}+\frac{\eta }{(\beta_i ^{0})}) \ge  \frac{1}{2} .
		\end{aligned}
        \label{L2}
	\end{equation}
	
	By Lemma~\ref{lemma0}, we obtain $
        {\textstyle \sum_{t=0}^{T-1}{\frac{1}{\beta_i ^{t} }  }}
			\leq {\textstyle \sum_{t=0}^{T-1}{\frac{1}{ (\alpha_i^{t})^2 }  }}$. Substituting (\ref{L2}) into (\ref{27}), we rewrite (\ref{27}) as
	\begin{equation}
		\begin{aligned}
			&\frac{1}{nT}  {\textstyle \sum_{t=0}^{T-1}} {\textstyle \sum_{i=1}^{n}  {E} \left \{  \left    \| z_{i}^{t}-\bar{x} ^{t}    \right \|^{2}  \right \}}  \le \frac{A_1}{T}+ \left(\frac{2A_2h d\eta ^{2}}{Tn^2 }\cdot \right.\\
			&\quad \quad \left. {\textstyle\sum_{i=1}^n \frac{C^{2}G^{2}c_2^{2}\varsigma_i^{2}\ln \left(1/\delta_i\right)}{\epsilon_i^2}}{\textstyle \sum_{t=0}^{T-1}\frac{(\alpha_i^{(T-t)})^2}{(\beta_i ^{t})^2}}{\textstyle \sum_{t=0}^{T-1}\frac{1}{(\alpha_i ^{t})^2}}\right)\\ 
			&\quad \quad+\frac{A_3 }{T}  \frac{1}{n}  \sum_{i=1}^{n}{\textstyle  \textstyle\sum_{t=0}^{T-1}}  E \left\{ \| d_\tau^{t} \|^{2} + \| \upsilon_i^t \|^{2} \right\},
		\end{aligned}
		\label{29}
	\end{equation}
	where $A_1=\frac{6 C^2\|x_i^{0}\|+108 C^2 F_0+18 C^2\left(m^2+3 a^2\right)}{(1-q)^2}$
	, $A_2=\frac{6 C^2\left(n+9L\right)}{(1-q)^2}$ and $A_3=\frac{110 C^2}{(1-q)^2}. $
	Substituting (\ref{29}) into (\ref{14}), we have
	\begin{equation}
		\begin{aligned}
			\frac{1}{T}&{\textstyle \sum_{t=0}^{T-1}}{E}  \left \| \nabla{f\left ( \bar{x}^{t}  \right ) }  \right \|^{2} \le  \frac{ 2F_0+ 2LA_1}{\sqrt{n T}}\\
			&+ \frac{2+2LA_3}{ \sqrt{n T}}\frac{1}{n}\textstyle\sum_{i=1}^n{\textstyle \sum_{t=0}^{T-1}} E \left\{ \| d_\tau^{t} \|^{2} + \| \upsilon_i^t \|^{2} \right\}\\
			&+ \left( \frac{4A_2+1}{\sqrt{n T}} \frac{h dL\eta^2}{n^2} {\textstyle \sum_{i=1}^n \frac{C^{2}G^{2}c_2^{2}\varsigma_i^{2}\ln \left( 1/\delta_i \right)}{\epsilon_i^2}} \cdot \right. \\
			&\quad \quad \left. {\textstyle \sum_{t=0}^{T-1} \frac{(\alpha_i^{(T-t)})^2}{(\beta_i^{t})^2}} \cdot {\textstyle \sum_{t=0}^{T-1} \frac{1}{(\alpha_i^{t})^2}} \right).
		\end{aligned}
		\label{31}
	\end{equation}
	
	By Lemma~\ref{lemma1}, combining (\ref{29}) and (\ref{31}) under the condition \( T \geq 4L^2n \), we finally derive 
	\begin{equation}
		\begin{aligned}
			&\frac{1}{n T}{\textstyle \sum_{t=0}^{T-1}{\textstyle \sum_{i=1}^n E\left\{\left\|\nabla f\left(z_i^t\right)\right\|^2\right\}}}\\
			=&  \frac{1}{n T} {\textstyle \sum_{t=0}^{T-1} {\textstyle \sum_{i=1}^n E\left\{\left\|\nabla f\left(z_i^t\right)+\nabla f\left(\bar{x}^t\right)\right\|^2-\nabla f\left(\bar{x}^t\right) \|^2\right\}}} \\
			\le & 2 L^2 \frac{1}{n T}{\textstyle \sum_{t=0}^{T-1}{\textstyle \sum_{i=1}^n E\left\{\left\|z_i^t-\bar{x}^t\right\|^2\right\}}}\\
			&\quad \quad+2 \frac{1}{T}{\textstyle \sum_{t=0}^{T-1} E\left\{\left\|\nabla f\left(\bar{x}^t\right)\right\|^2\right\}} \\
			\le & 2 L^2 \frac{1}{T}{\textstyle \sum_{t=0}^{T-1} O^t}+2 \frac{1}{T} {\textstyle \sum_{t=0}^{T-1} E\left\{\left\|\nabla f\left(\bar{x}^t\right)\right\|^2\right\}} \\
			\le& \frac{4F_{0}+5LA_{1}}{\sqrt{nT}}+\frac{4hdL(5A_2+1)M}{\sqrt{nT} }+\frac{4 +5LA_{3}}{\sqrt{nT}}(\varrho +\upsilon),
		\end{aligned}
		\label{33}
	\end{equation}
	where \[M=\frac{\eta^2}{n^2} {\textstyle\sum_{i=1}^n \frac{G^{2}c_2^{2}\varsigma_i^{2}\ln \left(1/\delta_i\right)}{\epsilon_i^2}}{\textstyle \sum_{t=0}^{T-1}\frac{(\alpha_i^{(T-t)})^2}{(\beta_i ^{t})^2}}{\textstyle \sum_{t=0}^{T-1}\frac{1}{(\alpha_i ^{t})^2}},\] $\varrho +\upsilon=\frac{1}{n}\textstyle\sum_{i=1}^n{\textstyle \sum_{t=0}^{T-1}} {E}   \left \{ \left \| d_\tau^{t} \|^{2}+\| \upsilon_i^t   \right \|^{2} \right \},$ $h=\frac{(1-\vartheta)}{(1+\vartheta)} + \frac{2}{(1+\vartheta)}\vartheta^{2\tau-1}$  and 
	\begin{equation*}
		\begin{aligned}
			T &\geq \max\left\{ 4nL^2, \left(\frac{162nC^{2}L^{2}}{1-q^{2}}\right)^{\frac{2}{1+2p}},(nL^2)^{\frac{2}{1+2p}}, \right. \\
			&\quad\quad\quad\quad\quad \left. n^{\frac{2}{1+2p}},  \left(\frac{2K}{nL}\right)^{\frac{2}{1+2p}}, \left(\frac{9}{\sqrt{n}}\right)^{\frac{4}{3-2p}} \right\}.
		\end{aligned}
	\end{equation*}	
	
\section{Proof of Lemma1}\label{Appendix:A}
During the training process, the learning rate also affects the noise, as indicated by the ${\textstyle \sum_{t=0}^{T-1} {\frac{1}{ (\alpha_i^{T-t})^2 }  }}={\textstyle \sum_{t=0}^{T-1} {\frac{1}{ (\alpha_i^{t})^2 }  }}$, and the variance of the noise is given by
$ \frac{d}{n} \sum_{i=1}^{\eta} \sigma_i^2 \sum_{t=0}^{T} \frac{1}{(\alpha_i^t)^2}$.

By Algorithm~1, we have
\[ 
\beta_i^t = \begin{cases} \alpha_i^t \alpha_i^{(T-t)} & t \leq {\Xi}{T}  \\ \alpha_i^t \alpha_i^t & t > {\Xi}{T}  \end{cases}, 
\]
thus, we obtain
\begin{equation}
	\begin{aligned}
		{\textstyle \sum_{t=0}^{T-1}{\frac{1}{\beta_i ^{t} }  }}&= {\textstyle  \sum_{t=0}^{T-1}{\frac{1}{\alpha_i^{t} \alpha_i^{T-t} }  }}\\
		&\stackrel{(a)}{\leq} \sqrt{{\textstyle \sum_{t=0}^{T-1} {\frac{1}{ (\alpha_i^{t})^2 }  }}{\textstyle \sum_{t=0}^{T-1}{\frac{1}{ (\alpha_i^{T-t})^2 }  }}}\\
		&=\sqrt{{\textstyle \sum_{t=0}^{T-1}{\frac{1}{ (\alpha_i^{t})^2 }  }}{\textstyle \sum_{t=0}^{T-1}{\frac{1}{ (\alpha_i^{t})^2 }  }}}= {\textstyle \sum_{t=0}^{T-1}{\frac{1}{ (\alpha_i^{t})^2 }  }},
	\end{aligned}
	\label{28}
\end{equation}
where in (a) we used Cauchy-Schwarz inequality.

\section{Proof of Lemma2}\label{Appendix:B}
By Algorithm~2, we have $\tilde{g}_i^t = (1 - \vartheta) \bar{g}_i^{t} + \vartheta \tilde{g}_i^{t-1}$, where $\overline{g}_i^t = \hat{g}_i^t +\alpha_i^{T-t} \cdot u_i^t$. We define the discrepancy between the current gradient and the true gradient as \( \|d_{\tau}^{t}\| = \|(1-\vartheta)\hat{g}_{i}^{t} + \vartheta\hat{g}_{i}^{t-1} - g_{i}^{t}\|\). We further analyze \( \|d_{\tau}^{t}\|\):
\begin{equation}
	\begin{aligned}
		& E\left\{\|d_{\tau}^{t}\|^{2}\right\} = E\left\{\|(1-\vartheta)\hat{g}_{i}^{t} + \vartheta\hat{g}_{i}^{t-1} - g_{i}^{t}\|^{2}\right\} \\
		&= E\left\{ \|(1 - \vartheta)\hat{g}_{i}^{t}  +  \sum_{r=1}^{t-2}(1-\vartheta)\vartheta^{r}\hat{g}_{i}^{t-r}  +  \vartheta^{t-1}\hat{g}_{i}^{t-\tau+1} - \hat{g}_{i}^{t}\|^{2}  \right\} \\
		&= E\left\{\|(1-\vartheta)\hat{g}_{i}^{t} + \sum_{r=1}^{t-2}(1-\vartheta)\vartheta^{r}\hat{g}_{i}^{t} + \vartheta^{t-1}\hat{g}_{i}^{t} - \hat{g}_{i}^{t}\right. \\
		&\quad\quad \left.+ \sum_{r=1}^{t-2}(1-\vartheta)\vartheta^{r}(\hat{g}_{i}^{t-r} - \hat{g}_{i}^{t}) + \vartheta^{t-1}(\hat{g}_{i}^{t-\tau+1} - \hat{g}_{i}^{t})\|^{2}\right\} \\
		&= E\left\{\|\sum_{r=1}^{t-2}(1-\vartheta)\vartheta^{r}(\hat{g}_{i}^{t-r} - \hat{g}_{i}^{t}) + \vartheta^{t-1}(\hat{g}_{i}^{t-\tau+1} - \hat{g}_{i}^{t})\|^{2}\right\} \\
		&\leq h d_{\tau},
	\end{aligned}
	\label{32}
\end{equation}
where $h=\frac{(1-\vartheta)}{(1+\vartheta)}(1-\vartheta^{2(\tau-1)}) + \vartheta^{2(\tau-1)} = \frac{(1-\vartheta)}{(1+\vartheta)} + \frac{2}{(1+\vartheta)}\vartheta^{2\tau-1}$.

\section{Proof of Lemma3}\label{Appendix:C}
Since the noise variance within the interval remains the same, we have
\begin{equation}
	\begin{aligned}
		&E \left\{ \| \alpha_i^{T-t} u_i^t \|^2 \right\} = E \left\{ \| (1-\vartheta) u_i^t + \vartheta u_i^{t-1} \|^2 \right\} \\
		&= E \left\{ \left\| \sum_{r=0}^{t-2} (1-\vartheta) \vartheta^r \alpha_i^{T-t+r} u_i^{t-r}+ \vartheta^{t-1} \alpha_i^{T-(t-\tau+1)} u_i^{t-\tau+1} \right\} \right\} \\
		&\leq \left( \frac{(1-\vartheta)}{(1+\vartheta)} + \frac{2}{(1+\vartheta)} \vartheta^{2\tau-1} \right) E \left\{ \| \alpha_i^{T-t} u_i^t \|^2 \right\}.
	\end{aligned}
	\label{22}
\end{equation}

\section{Proof of Theorem1}\label{Appendix:D}
By making a slight modification to Lemma 3 in [9], we obtain
\begin{equation}
	\begin{aligned}
		\alpha _{M}(\lambda ) &= {\textstyle \sum_{t=0}^{T-1}} \alpha _{{M}_t }(\lambda )\le \frac{\lambda(\lambda+ 1)\varsigma_i^2}{2\sigma_i ^2}{\textstyle\sum_{t=1}^T \frac{1}{(\alpha_i^{T-t})^2}}\\
		&= \frac{\lambda(\lambda+ 1)\varsigma_i^2}{2\sigma_i ^2}{\textstyle\sum_{t=1}^T \frac{1}{(\alpha_i^{t})^2}}
	\end{aligned}
\end{equation}
Considering the fact that the sampling probability for each node $i$ is \( \varsigma_i=\frac{\|B_i\|}{\|D_i\|}\), the proof of Theorem 1 follows by extending the proof of Theorem 1 in [9]. 

\section{Proof of Corollary1}\label{Appendix:F}
For all \( a_{1}, a_{2}, a_{3} \in \mathbb{R} \), we derive the general form of \( \alpha_i^t \) as  
\[
(\alpha_i^t)^2=\frac{a_{1}^2}{(\lfloor\frac{t}{a_{3}}\rfloor+a_{2})^{2s}},
\]
We further analyze $\alpha_i^t$ and obtain
\begin{equation}
	\begin{aligned}
		(\alpha_i^t)^2=\frac{a_{1}^2}{(\lfloor\frac{t}{a_{3}}\rfloor+a_{2})^{2s}}\leq\frac{a_{3}a_{1}^2}{(t+a_{2})^{2s}}.
	\end{aligned}
\end{equation}
There exist constants $H_{1}$, $H_{2}$, and $H_{3}$, such that we have
\begin{equation}
	\begin{aligned}
		\sum_{t=0}^{T-1} (\alpha_i^t)^2 &\leq \sum_{t=0}^{T-1} \frac{a_{3}a_{1}^2}{(t+a_{2})^{2s}} \\
		&\leq a_{3}a_{1}^2 \left( 1 + \int_0^{T-1} \frac{\mathrm{d}t}{(t+a_{2})^{2s}} \right) \\
		&\leq \begin{cases} 
			H_{1} T^{1-2s} & 0 < s < \frac{1}{2} \\
			H_{2} \operatorname{log} T & s = \frac{1}{2} \\
			H_{3} & \frac{1}{2} < s < 1 
		\end{cases}
	\end{aligned}
\end{equation}

thus, by substituting into Theorem 2, we obtain Corollary 1.

 \bibliographystyle{IEEEtran}
\bibliography{IEEEabrv,cas-refs}

\vspace{60pt}
\begin{IEEEbiographynophoto}{Xiaoming Wu}
received the M.Eng. degree in computer science and technology from Shandong University, Jinan, China, in 2006, and the Ph.D. degree in Software Engineering from Shandong University of Science and Technology in 2017. Since 2006, he has been with the Shandong Computer Science Center, where he is currently a full professor. He also serves as the director of the Faculty of Computer Science and technology at Qilu University of Technology (Shandong Academy of Sciences), China. His research interests include cyber security, industrial Internet, data security, and privacy protection.
\end{IEEEbiographynophoto}

\vspace{-17pt}
\begin{IEEEbiographynophoto}{Teng Liu}
received the B.S. degree from the School of Mathematics and Statistics, Qilu University of Technology (Shandong Academy of Sciences), China, in 2022. He is currently pursuing a master's degree in computer science and technology at Shandong Computer Science Center, Qilu University of Technology (Shandong Academy of Sciences). His research interests include distributed learning and DP.
\end{IEEEbiographynophoto}

\vspace{-17pt}
\begin{IEEEbiographynophoto}{Xin Wang}
(Member, IEEE) received the B.E. degree in Electrical Engineering and Automation from China University of Mining and Technology, Xuzhou, China, in 2015, and the Ph.D. degree in Control Science and Engineering from Zhejiang University, Hangzhou, China, in 2020. He was a visiting scholar in the Department of Computer Science, Tokyo Institute of Technology, Yokohama, Japan, from 2018 to 2019. He is currently a Professor at the Shandong Computer Science Center, Qilu University of Technology (Shandong Academy of Sciences), Jinan, China.  His research interests include distributed artificial intelligence, federated learning, and data security.
\end{IEEEbiographynophoto}

\vspace{-17pt}
\begin{IEEEbiographynophoto}{Ming Yang}
(Member, IEEE) received the B.S. and M.S. degrees from the School of Information Science and Engineering, Shandong University, Jinan, China, in 2004 and 2007, respectively, and the Ph.D. degree from the School of Electronic Engineering, Beijing University of Posts and Telecommunications, Beijing, China, in 2010. He is currently a Professor with the Shandong Computer Science Center, Qilu University of Technology (Shandong Academy of Sciences), Jinan. His research interests include cloud computing security, big data security, and network security.
\end{IEEEbiographynophoto}

\vspace{-17pt}
\begin{IEEEbiographynophoto}{Jiguo Yu}
(Fellow, IEEE) received the Ph.D. degree from the School of Mathematics, Shandong University, in 2004. He was a Full Professor with the School of Computer Science, Qufu Normal University, Shandong, China, in 2007. He is currently a full professor with University of Electronic Science and Technology of China, Chengdu, Sichuan, China, and a joint professor at Qilu University of Technology, Jinan, Shandong, China. His main research interests include wireless network, privacy-aware computing, blockchain, intelligent IoT, AI and cyber security, distributed computing and graph theory. He is a Fellow of IEEE, a member of ACM and a senior member of the China Computer Federation (CCF).
\end{IEEEbiographynophoto}

\vfill

\end{document}